\documentclass[journal]{IEEEtran}
\usepackage{amsmath,amssymb,amsfonts,amsthm}
\usepackage{algorithmic}
\usepackage{algorithm}
\usepackage{graphicx}
\usepackage{textcomp}
\usepackage{xcolor}
\usepackage{cite}
\usepackage{bm}
\usepackage{booktabs}
\usepackage{multirow}
\definecolor{revblue}{rgb}{0,0,0.72}
\newif\ifRevHL \RevHLfalse
\newcommand{\revB}{}
\newcommand{\revE}{}
\protected\long\def\rev#1{{#1}}
\protected\long\def\revb#1{\ifRevHL{\color{revblue}{}#1}\else{#1}\fi}
\newtheorem{proposition}{Proposition}
\newtheorem{corollary}{Corollary}

\newcommand{\vx}{\mathbf{x}}
\newcommand{\vy}{\mathbf{y}}
\newcommand{\vmu}{\boldsymbol{\mu}}
\newcommand{\mQ}{\mathbf{Q}}
\newcommand{\mSig}{\boldsymbol{\Sigma}}
\newcommand{\mG}{\mathbf{G}}
\newcommand{\E}{\mathbb{E}}
\newcommand{\Var}{\mathrm{Var}}
\DeclareMathOperator*{\argmax}{arg\,max}

\begin{document}
\bstctlcite{IEEEexample:BSTcontrol}

\title{INLA for Adaptive UAV Sampling of Radio Environment Maps in IoT Networks:\\
\rev{Prior Width Governs Accuracy and Calibration}}

\author{
\IEEEauthorblockN{Changhao He, Kaichuang Yang, Xiaojuan Zhang,~\IEEEmembership{Senior Member,~IEEE}}
\thanks{C. H. He, K. C. Yang are with King Abdullah University of Science and Technology, Saudi Arabia, (\{changhao.he, kaichuang.yang\}@kaust.edu.sa). X. J. Zhang is with the Institute of Advanced Intelligence and Computing, Agency for Science, Technology and Research, Singapore, (xiaojuanzhang@ieee.org).}}

\maketitle

\begin{abstract}
We study adaptive UAV sampling for radio environment map (REM) construction
from non-Gaussian IoT observations. Integrated nested Laplace approximation
(INLA) updates a Mat\'ern GMRF posterior, with waypoints selected by entropy
pre-screening and Sherman--Morrison lookahead over the reachable set.
We show that the likelihood-induced Skew-Normal correction has a sextic
entropy effect, explaining its limited influence on waypoint selection.
Across 19 retained ray-traced windows,
\rev{Random and lawnmower sampling cost a median 18.0\% and 14.0\% more residual NMSE
than posterior-informed sampling, equivalently reductions of 15.3\% and 12.3\% when referred to
those baselines}, while differences among informative rules remain unresolved.
Widening the prior shadowing standard deviation from 6.0 to 10.025\,dB
improves NMSE and credible-interval coverage,
\revb{with monotone improvement over a 4--16\,dB sweep on this corpus and
no observed turning point,} although overconfidence persists.
\revb{Online width estimation from the UAV's own counts yields about
6.5\,dB without recovering the gain. Across 30 measured LoRaWAN propagation
windows paired with simulated counts and scored only on received cells,
the margin over non-adaptive sampling reproduces, the training-derived
width minimizes the tested interval score, and GP-UCB has nominally
$3$--$5\%$ lower NMSE.}
\revb{The results show substantial, corpus-dependent sensitivity to
prior specification, while fine differences among acquisition rules
have limited or unresolved effects, the preferred width depends on
the corpus, scored cells and model mismatch.}
\end{abstract}

\begin{IEEEkeywords}
Adaptive sampling, Bayesian experimental design, Gaussian Markov random field, informative
path planning, INLA, Internet of Things (IoT),
posterior calibration, prior sensitivity, radio environment map (REM), UAV.
\end{IEEEkeywords}

\section{Introduction}
\label{sec:intro}

\rev{An operator deploying a $500{\times}500$\,m industrial Internet-of-Things (IoT)
site must decide where to add gateways, which cells require further measurements,
and when to stop surveying. These decisions rely on a radio environment map
(REM)~\cite{Yilmaz2013}, which an unmanned aerial vehicle (UAV) can construct.
With $10^4$ candidate locations at $5$\,m resolution and flight endurance of
tens of minutes, exhaustive measurement is infeasible~\cite{Sun2025fm}. Random and lawnmower sampling ignore previous measurements. Model-based
strategies instead select locations using a Gaussian process or Kriging
posterior~\cite{Shrestha2023,Polyzos2024}, commonly by maximizing predictive
variance.

With this model,} local single-step criteria can induce identical rankings.
A hypothetical observation at location~$j$ reduces local uncertainty according
to $c_j\sigma_j^2$, where $c_j = -\ell''(\mu_j)$ is the likelihood curvature
at the posterior mean. With additive Gaussian noise, $c_j \equiv c$ is constant,
so the one-step entropy reduction $\tfrac{1}{2}\log(1+c\sigma_j^2)$ is strictly
monotone in $\sigma_j^2$. Entropy, variance and local one-step mutual information
therefore induce identical rankings. We call this the \emph{Gaussian equivalence
ceiling}\rev{: changing between these local criteria does not change the selected
cell. This statement concerns local scores, not covariance-based criteria such
as the summed-marginal lookahead of~\eqref{eq:DeltaU-cancel}.}

Non-Gaussian IoT observations remove the constant-curvature condition.
Binary spectrum-sensing decisions~\cite{Haykin2005} give Bernoulli likelihoods
with $c_j = \beta_1^2 p_j(1{-}p_j)$, while random-access preamble
counts~\cite{Baccelli2009} give Poisson likelihoods with
$c_j = N_e e^{\mu_j}$, which can vary across the field by orders of magnitude.
The posterior can then be skewed, and the integrated nested Laplace
approximation (INLA)~\cite{Rue2009} estimates this skewness during inference.
We investigate whether it improves waypoint selection and what governs
REM accuracy and posterior calibration.

\rev{We study acquisition and prior specification within a common framework.}
\revb{INLA with its simplified-Laplace correction~\cite{Rue2009}, the
Mat\'{e}rn--SPDE prior~\cite{Lindgren2011}, the Skew-Normal
family~\cite{Azzalini1985} and sequential Bayesian experimental design
are inherited components. Our contributions are the skewness-aware
acquisition rule, its analysis and the following empirical findings:}
\begin{enumerate}
\item We formulate adaptive REM construction from Poisson and Bernoulli
IoT observations using an INLA posterior and a kinematically constrained
UAV sampler, with an entropy pre-screen followed by a covariance lookahead.

\item We quantify the effect of prior width on reconstruction accuracy
and posterior calibration. Changing only the prior shadowing standard
deviation from 6.0 to 10.025\,dB improves coverage by 7.3 percentage points
and the six-rule pooled median NMSE by 8.9\%.
\revb{The improvement is monotone over $4$--$16$\,dB on the ray-traced
corpus, whereas the training-derived residual scale minimizes the tested
interval score on received cells of the measured corpus. Online marginal-likelihood estimation settles
near $6.5$\,dB on the ray-traced corpus and does not recover the
better-performing wider prior, even with a learned window-level mean
offset.}

\item We characterize the skewness correction theoretically and
experimentally. The Skew-Normal entropy gap is
$\Delta H(\alpha)=C_6\alpha^6+O(\alpha^8)$, and its isolated accuracy
effect is statistically unresolved over the tested regime.

\item We evaluate the framework on 19 held-out ray-traced maps
\revb{and the 21 excluded maps, and on 30 windows of measured LoRaWAN
propagation fields paired with simulated counts, scored on received cells.}

\item \revb{We assess sensitivity to prior-width misspecification
(synthetic corpus), observation-model misspecification (ray-traced corpus)
and non-stationary traffic (synthetic corpus), and report flight distance,
computational cost and a normalized link-budget diagnostic on the
synthetic and ray-traced corpora.}
\end{enumerate}

\section{\rev{Related Work}}
\label{sec:related}

\noindent\textbf{REM construction and radio-map estimation.}
REMs support spectrum management~\cite{Yilmaz2013}, with sparse reconstruction
surveyed as spectrum cartography~\cite{Romero2022}. Approaches include
low-rank estimators and image-based map completion~\cite{Levie2021}.
Many formulations use given measurement locations and additive-Gaussian
received-power observations. Here, locations are selected sequentially under
UAV motion constraints, retaining counting and detection likelihoods.

\noindent\textbf{UAV informative path planning.}
Methods include continuous-space Bayesian optimization under a GP
posterior~\cite{Marchant2014} and learned planning
policies~\cite{Bayerlein2021}. Wireless applications include online UAV
spectrum surveying~\cite{Shrestha2023} and planning with known radio maps
and rotary-wing propulsion costs~\cite{ZengEnergy2019}. Our study examines
non-Gaussian acquisition and evaluates posterior calibration alongside
reconstruction accuracy.

\noindent\textbf{Bayesian active learning and INLA.}
GP sensor placement distinguishes mutual-information objectives from
marginal entropy and variance~\cite{Krause2008}, GP-UCB provides
exploration bonuses with regret guarantees~\cite{Srinivas2012}, and
GP active learning supports 5G radio-map
reconstruction~\cite{Polyzos2024}. INLA provides deterministic
inference~\cite{Rue2009}, the Mat\'{e}rn--SPDE representation supplies
a sparse prior~\cite{Lindgren2011}, and recent work develops scalable
skewed posterior approximations~\cite{Dutta2026}. Related methods use
Laplace-approximated Poisson posteriors for Bayesian
optimization~\cite{Mei2024,yang2026isbo}. Here, we instead use posterior
skewness in acquisition for global field reconstruction and isolate its
empirical effect.

\noindent\textbf{Counting observations and IoT sensing.}
Latent Gaussian fields drive log-Gaussian Cox counts, while the
Skew-Normal family~\cite{Azzalini1985} provides asymmetric marginals
with tractable entropy~\cite{ArellanoValle2013}. Wireless examples
include Bernoulli energy-detection decisions~\cite{Haykin2005},
random-access counts~\cite{Baccelli2009}, NB-IoT preamble
detections~\cite{Chougrani2022} and cooperative IIoT
sensing~\cite{XLiu2023}, with UAVs serving as data
collectors~\cite{Zeng2019}. Here, counts and detections serve as
spatial measurements rather than only as traffic.

\section{System Model}
\label{sec:model}

\begin{figure*}[t]
\centering
\includegraphics[width=\textwidth]{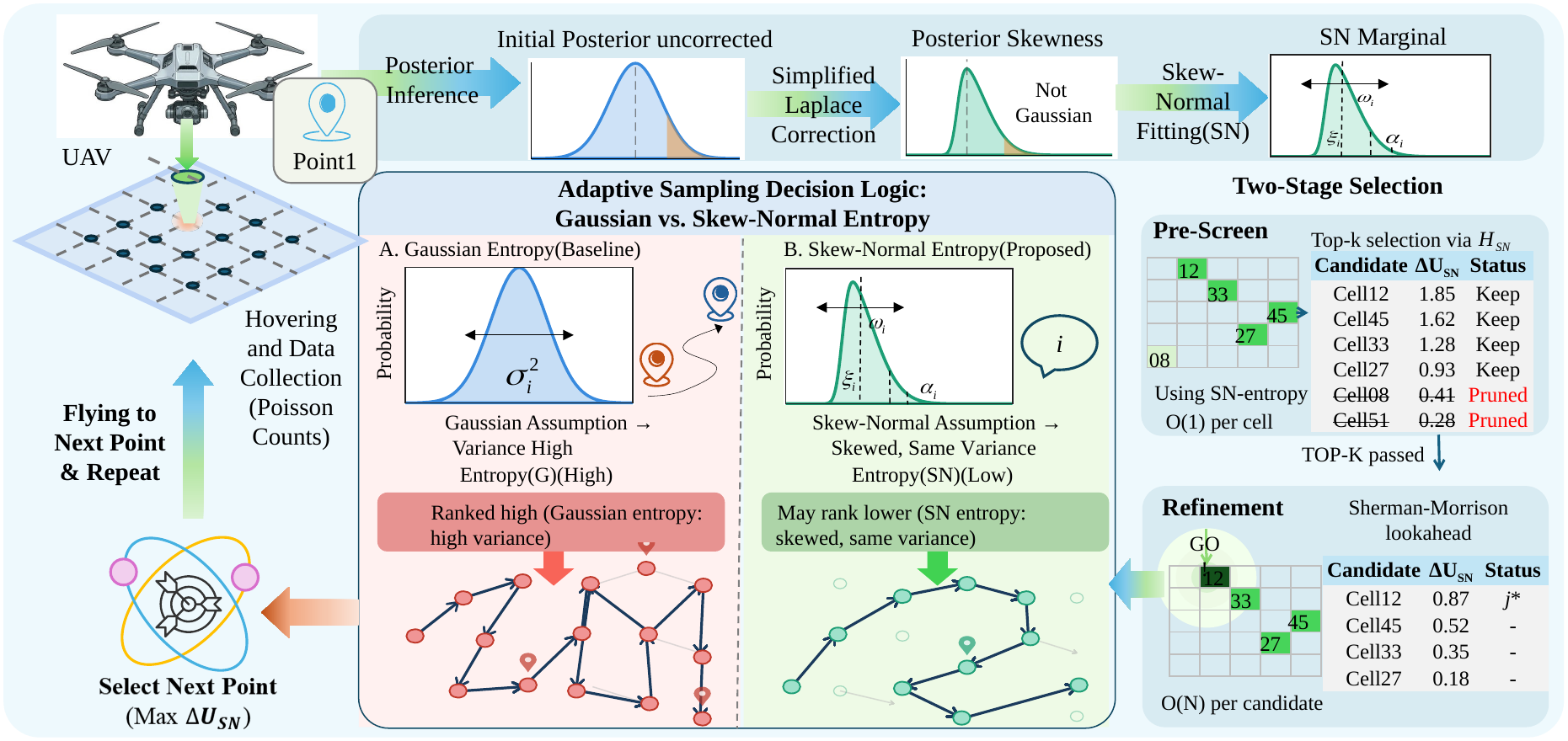}
% \caption{Adaptive sampling framework: INLA update, entropy pre-screen and
% Sherman--Morrison lookahead over the kinematic reachable set.
% \rev{Panels~A--B are schematic, no accuracy difference between the two rules
% is resolved (Section~\ref{sec:null}).}}
\caption{Adaptive UAV REM sampling: on-station observations,
ground-station posterior updates, and two-stage waypoint selection.
Skewness affects the entropy pre-screen, the fixed-shape
Sherman--Morrison lookahead is Gaussian-equivalent. Numbers and paths are schematic, the pre-screen ranks marginal SN entropy $H_{\mathrm{SN}}$.}
\label{fig:system}
\end{figure*}

The UAV sends observations to a ground station for posterior updates and
waypoint selection (Fig.~\ref{fig:system}).
\rev{The index~$t$ counts measurement cycles comprising a hop, an on-station
listening window and an update (Section~\ref{sec:kinematics}), elapsed time
is not simulated.}

\subsection{Spatial Field}
The region comprises $N$ cells at $\{s_1,\ldots,s_N\}$. The latent field uses
natural-log units, $x_j = (P_j - P_{\mathrm{ref}})/\kappa_P$ with
$\kappa_P = 10/\ln 10 = 4.3429$\,dB per nat, so $e^{x_j}$
in~\eqref{eq:poiss} is the linear power ratio to $P_{\mathrm{ref}}$.
\rev{We use $\vx \sim \mathcal{N}(\mathbf{m}, \mQ^{-1})$, where
$\mathbf{m}$ is a known log-distance path-loss trend in the same units
and the shadowing residual has a zero-mean GMRF prior. The trend enters
\eqref{eq:poiss} as a known offset.}
The Mat\'{e}rn--SPDE representation~\cite{Lindgren2011} gives
\begin{equation}\label{eq:Q-spde}
\mQ = \tau^2\,\mathbf{K}_{\kappa^2}\mathbf{C}^{-1}\mathbf{K}_{\kappa^2},
\end{equation}
where $\mathbf{C}$ and $\mG$ are the mass and stiffness matrices,
$\mathbf{K}_{\kappa^2}=\kappa^2\mathbf{C}+\mG$,
$\kappa=\sqrt{8\nu}/\rho$ sets the range~$\rho$, and
$\tau^2=1/(4\pi\kappa^2\sigma_s^2)$ for $d{=}2$, $\nu{=}1$.
The prior shadowing standard deviation $\sigma_P$ is quoted in dB,
with $\sigma_s = \sigma_P/\kappa_P$.

\subsection{Observation Models}
At step~$t$, the UAV visits cell~$l_t$ and observes $y_t$
conditioned on $x_{l_t}$.

\textbf{Model~A: spectrum sensing (Bernoulli)}~\cite{Haykin2005},
\begin{equation}\label{eq:bern}
y_i \mid x_i \sim \mathrm{Bern}\!\bigl(\sigma(\beta_0 + \beta_1 x_i)\bigr),\quad
\sigma(\eta) = 1/(1+e^{-\eta}),
\end{equation}
has bounded, state-dependent curvature
$c_j = \beta_1^2 p_j(1-p_j)$, where
$p_j=\sigma(\beta_0+\beta_1\mu_j)$.

\textbf{Model~B: activity counting (Poisson)}~\cite{Baccelli2009},
\begin{equation}\label{eq:poiss}
y_i \mid x_i \sim \mathrm{Poisson}(N_e \exp(x_i)),
\end{equation}
has curvature $c_j = N_e e^{\mu_j}$, which can vary by orders of magnitude.
Unlike additive-Gaussian REM models~\cite{Polyzos2024},
\rev{these likelihoods do not impose the constant-curvature equivalence
of local criteria described in Section~\ref{sec:intro}.}

\subsection{\rev{Physical Instantiation of the Counting Model}}
\label{sec:iot-instantiation}

\rev{We instantiate Model~B as idealized LoRaWAN uplink traffic in EU863-870.
We assume devices transmit at the duty-cycle ceiling, independent attempts
superpose approximately as a Poisson process, adaptive data rate selects the
lowest spreading factor that closes the link, and every attempt within the
footprint is detected and attributed to its cell.}

In the $868.0$--$868.6$\,MHz sub-band, devices are limited to $25$\,mW
e.r.p.\ and $\le 1\%$ duty cycle~\cite[Table~B.1, band~M]{EN300220_2},
measured over one hour~\cite[cl.~5.4.1]{EN300220_1}, default-channel
restrictions are specified in~\cite[Table~2]{LoRaWAN_RP102}.
At the ceiling, $r_{\mathrm{dev}}(\mathrm{SF}) =
\delta/T_{\mathrm{air}}(\mathrm{SF})$ with $\delta = 0.01$.
Across the spreading-factor ladder, each sensitivity
step~\cite[Table~10]{SX1276} buys about $2.6$\,dB of margin while roughly
doubling time on air~\cite[\S4.1.1.7]{SX1276}.
The permitted rate is approximated by
$r_{\mathrm{dev}} \propto p^{\,b}$ with $b = 1.052$ from the endpoint
chord and $1.072$ from a six-rung least-squares fit%
\rev{ ($20$-byte payload, $125$\,kHz, coding rate $4/5$).
The exponent-one link in~\eqref{eq:poiss} differs by
$(p/p_0)^{b-1}$, within $\pm11\%$ over the $13$\,dB ladder when anchored
at its centre, with larger discrepancies outside it.}

\rev{
Each measurement aggregates attempts at the arrival cell~$l_t$
according to~\eqref{eq:poiss}, over an on-station listening window
$T_{\mathrm{obs}}$ and footprint $A_{\mathrm{meas}} = h^2 = 25$\,m$^2$.
No observations are collected in motion or shared between cells.
The exposure is
$N_e = \rho_{\mathrm{dev}} r_{\mathrm{dev}} h^2 T_{\mathrm{obs}}$,
where $\rho_{\mathrm{dev}}$ is device density.}
At the IMT-2020 connection density
$\rho_{\mathrm{dev}} = 10^{6}$\,km$^{-2}$~\cite[cl.~4.8]{ITU2410},
the reference SF10 rate is $r_{\mathrm{dev}} = 0.02698$\,s$^{-1}$,
\rev{giving
\begin{equation}\label{eq:Ne-number}
N_e = 0.6745\,T_{\mathrm{obs}}/(1\,\mathrm{s}), \qquad
N_e = 1 \,\Leftrightarrow\, T_{\mathrm{obs}} = 1.48\,\mathrm{s}.
\end{equation}
Thus, $N_e \in \{1,2,5\}$ corresponds to dwells of $1.5$ to $7.4$\,s.}
The propagation anchor ($P_0 = -24.43$\,dBm at $1$\,m,
$n = 3.76$~\cite[cl.~4.5.2]{TR36942}, gateway distance $623.0$\,m)
sets $P_{\mathrm{ref}} = -129.50$\,dBm at the centre of the
$-123$ to $-136$\,dBm ladder.
The rate represents a regulatory ceiling, not predicted traffic,
and counts represent attempts, not decoded frames.
The exponential branch covers $99.5$ to $57.8\%$ of ROI cells
as $\sigma_P$ increases from $2$ to $8$\,dB.

\subsection{\rev{UAV Kinematics and the Reachable Set}}
\label{sec:kinematics}

The rotary-wing UAV flies at constant altitude and cruise speed $v_{\max}$
and can hover and yaw in place\rev{, acceleration, deceleration and turns
are instantaneous. The travel allowance $\Delta t$ excludes sensing and
computation: each hop covers at most $d_{\max} = v_{\max}\Delta t$,
followed by an on-station listening window $T_{\mathrm{obs}}$
(Section~\ref{sec:iot-instantiation})}.
The reachable set is
\begin{equation}\label{eq:reachable}
\mathcal{A}_t = \bigl\{\, j \,:\, \|s_j - s_{l_t}\|_2 \le d_{\max} \,\bigr\}
\end{equation}
intersected with the ROI grid. Table~\ref{tab:params} gives
$d_{\max}=20$\,m and \rev{an interior $|\mathcal{A}_t| = 49$,
reduced near the boundary}.
We report measurement count and flown distance
$D = \sum_t\|s_{l_{t+1}}-s_{l_t}\|_2$ separately.
\rev{Under fixed-speed cruise and hover~\cite{ZengEnergy2019},
propulsion energy is affine in $D$ with a slope whose sign depends on
whether mission duration or dwell is fixed, so no energy saving is
claimed.}

\begin{table}[t]
\centering
\caption{\rev{Simulation parameters.}}
\label{tab:params}
\footnotesize
\setlength{\tabcolsep}{2.8pt}
\begin{tabular}{llc}
\toprule
Symbol & Meaning & Value \\
\midrule
$N$, $h$ & \rev{ROI cells synth./ray-tr.}, spacing & \rev{$32^2$\,/\,$19^2$}, $5$\,m \\
$\rho$, $\nu$, $d$ & Mat\'{e}rn range, smoothness, dim. & $50$\,m, $1$, $2$ \\
$\kappa_P$ & dB per nat, $10/\ln 10$ & $4.3429$ \\
$\sigma_P$ & prior shadowing s.d., nominal & $6.0$\,dB \\
$\sigma_P'$ & prior shadowing s.d., data-matched & $10.025$\,dB \\
$P_{\mathrm{ref}}$ & \rev{ref.\ level, anchor/ray-traced} & \rev{$-129.50$/$-120.65$\,dBm} \\
$n$ & \rev{path-loss exp., anchor/ray-traced} & \rev{$3.76$/$3.90$} \\
$v_{\max}$, $\Delta t$ & cruise speed, \rev{travel allowance} & $10$\,m/s, $2$\,s \\
$d_{\max}$, $|\mathcal{A}_t|$ & reachable radius, \rev{interior} set size & $20$\,m, $49$ \\
$T$ & budget, synthetic / ray-traced & $200$ / $70$ \\
$N_e$ & \rev{exposure, synthetic/ray-traced} & \rev{$1$--$100$/$1$} \\
$\beta_0$, $\beta_1$ & Bernoulli threshold, sensitivity & $-3.5$, $0.35$ \\
$K$, $K_s$ & pre-screen size, refresh period & $20$, $10$ \\
$\varrho$, $\epsilon$, $k_{\max}$ & damping, tol., cap \rev{(main/later)} & \rev{$0.8/1.0$}, $10^{-6}$, \rev{$50/100$} \\
$\kappa_{\mathrm{UCB}}$ & GP-UCB exploration coefficient & $20$ \\
$R$ & repetitions, synthetic / ray-traced & $50$ / $5{\times}19$ \\
\bottomrule
\end{tabular}
\end{table}

\section{INLA-Based Posterior Approximation}
\label{sec:inla}

\rev{We use INLA with fixed hyperparameters, except in the online-estimation
experiments of Section~\ref{sec:results}, hyperparameter uncertainty is not
integrated out.}
The Gaussian approximation $\mathcal{N}(\vmu, \mQ_*^{-1})$ is obtained
by Newton--Raphson on
$f(\vx) = -\tfrac{1}{2}(\vx-\mathbf{m})^\top\mQ(\vx-\mathbf{m})
+ \sum_t \ell_t(x_{l_t})$~\cite{Rue2009}.
At iteration~$k$,
\begin{equation}\label{eq:ci-bi}
w_i = -\!\!\sum_{t:l_t=i}\ell_t'' \ge 0, \qquad
b_i = w_i\mu_i^{(k)} + \sum_{t:l_t=i}\ell_t' + [\mQ\mathbf{m}]_i ,
\end{equation}
\rev{where $w_i$ is the accumulated likelihood curvature at cell~$i$.}
With $\mQ_* = \mQ + \mathrm{diag}(\mathbf{w})$, the Newton proposal is
$\mQ_*^{-1}\mathbf{b}$, with damping and stopping specified in
Algorithm~\ref{alg:main}. The resulting approximation has mean $\vmu$
and covariance $\mSig = \mQ_*^{-1}$, with
$\sigma_i = \sqrt{\Sigma_{ii}}$. The simplified Laplace approximation~\cite{Rue2009} adds linear and cubic
corrections to the Gaussian log-marginal in
$z_i = (x_i - \mu_i)/\sigma_i$:
\begin{equation}\label{eq:SLA}
\log\tilde\pi_{\mathrm{SLA}}(z_i)
= \mathrm{const} - \tfrac{1}{2}z_i^2 + \gamma_i^{(1)}z_i
+ \tfrac{1}{6}\gamma_i^{(3)}z_i^3,
\end{equation}
\rev{with an $O(z_i^4)$ local remainder. For a nonzero cubic coefficient,
this expression is not globally normalizable.}
The coefficients are
\begin{equation}\label{eq:gamma3}
\gamma_i^{(3)} = \sum_{j\neq i} d_j^{(3)}
\left(\frac{\Sigma_{ij}}{\sigma_i}\right)^{\!3},
\qquad d_j^{(3)} = \sum_{t:l_t=j}\ell_t''' ,
\end{equation}
and $\gamma_i^{(1)} = \tfrac{1}{2}\sum_{j\neq i}
d_j^{(3)}(\sigma_j^2-\Sigma_{ij}^2/\sigma_i^2)\,\Sigma_{ij}/\sigma_i$%
\revb{, following~\cite[Eqs.~20--21]{Rue2009}. The location shift affects
$\xi_i$, not $\alpha_i$ or the entropy ranking}.
The cubic covariance weighting drives the decay in
Corollary~\ref{cor:convergence}.

\rev{
Equation~\eqref{eq:gamma3} and its implementation exclude $j=i$,
following~\cite[Eq.~21]{Rue2009}. Including it adds
$d_i^{(3)}\sigma_i^3$, which is zero at unobserved cells.
Against exact marginals by quadrature on a $5\times5$ lattice
(eight settings), the truncated
expression recovers $0.5$--$19\%$ of the standardized third cumulant at
observed nodes, restoring the term gives $0.3$--$7\%$ error.
At unobserved nodes, the expression agrees within $4\%$.
All reported results use the truncated correction, not a bound on the
full expansion (Section~\ref{sec:limits}).}

\rev{We identify $\gamma_i^{(3)}$ with the standardized third cumulant
only to first order: the perturbative moments of~\eqref{eq:SLA},
$\E[z_i] = \gamma_i^{(1)}+\tfrac{1}{2}\gamma_i^{(3)}$, $\E[z_i^2]=1$,
$\E[z_i^3] = 3\gamma_i^{(1)}+\tfrac{5}{2}\gamma_i^{(3)}$, give third
cumulant $\gamma_i^{(3)}+O(\gamma^2)$.}
We fit an SN density~\cite{Azzalini1985},
\begin{equation}\label{eq:SN-density}
f(x) = \frac{2}{\omega}\,\phi\!\left(\frac{x-\xi}{\omega}\right)
        \Phi\!\left(\alpha\,\frac{x-\xi}{\omega}\right),
\end{equation}
\rev{matching variance and the first-order standardized third cumulant,
with location set below}.
With $\delta = \alpha/\sqrt{1+\alpha^2}$, the standardized third cumulant
is $\kappa_3(\alpha) = C\,\delta^3/(1-2\delta^2/\pi)^{3/2}$,
where \rev{$C = \sqrt{2}(4-\pi)/\pi^{3/2} = 0.2180136$ and its limiting
magnitude is $\Gamma_{\max} = C(1-2/\pi)^{-3/2} = 0.9952717$.
Setting $\kappa_3(\alpha_i) = \gamma_i^{(3)}$ and}
$r_i = \gamma_i^{(3)}/C$, $q_i = |r_i|^{2/3}$ gives
\begin{equation}\label{eq:alpha-fit}
\begin{split}
\frac{\delta_i^3}{(1-2\delta_i^2/\pi)^{3/2}} = r_i
 \,\Longleftrightarrow\,{}&
 \delta_i = \operatorname{sign}(r_i)\sqrt{\frac{q_i}{1+2q_i/\pi}}.
\end{split}
\end{equation}
\rev{Algorithm~\ref{alg:main} uses this algebraic inverse, clipping
$|\gamma_i^{(3)}|$ below $\Gamma_{\max}$ to keep $\alpha_i$ finite.
The largest value over the nine-cell synthetic and two ray-traced
grids (later runs were not scanned), $0.429$, gives
$|\delta_i| = 0.8862$.}
Variance matching and \rev{a partial mean match} give
\begin{equation}\label{eq:omega-xi}
\omega_i = \sigma_i\big/\!\sqrt{1-2\delta_i^2/\pi},\quad
\xi_i = \mu_i + \gamma_i^{(1)}\sigma_i
- \omega_i\delta_i\sqrt{2/\pi},
\end{equation}
preserving $\Var_{\mathrm{SN}} = \sigma_i^2$ exactly.
\rev{The location omits $\tfrac{1}{2}\gamma_i^{(3)}$ from $\E[z_i]$,
up to $0.21\sigma_i$, but does not affect reported quantities:
the entropy~\eqref{eq:Hsn} depends only on $\omega_i$ and $\alpha_i$,
while NMSE and coverage use the Laplace mean and variance.}

\section{Skewness-Aware Planning}
\label{sec:plan}

\subsection{Skew-Normal Entropy and the Size of the Correction}
For $\mathrm{SN}(\xi,\omega,\alpha)$, the differential entropy
is~\cite{ArellanoValle2013}
\begin{equation}\label{eq:Hsn}
H_{\mathrm{SN}} = \tfrac{1}{2}\log(2\pi e\,\omega^2)
- \log 2 - \Psi(\alpha),
\end{equation}
where $\Psi(\alpha) =
\E_{Z\sim\mathrm{SN}(0,1,\alpha)}[\log\Phi(\alpha Z)]$.
At $\alpha=0$, $\Psi(0)=-\log 2$ recovers Gaussian entropy.
Proposition~\ref{prop:gap} quantifies the gap from a
variance-matched Gaussian.

\begin{proposition}\label{prop:gap}
Let $H_{\mathrm{G},i} = \tfrac{1}{2}\log(2\pi e\,\sigma_i^2)$ and
$\Delta H_i = H_{\mathrm{G},i} - H_{\mathrm{SN},i}$.
Then $\Delta H_i \geq 0$, with equality iff $\alpha_i = 0$, and
\revB
\begin{equation}\label{eq:sextic}
\Delta H_i = C_6\,\alpha_i^6 + O(\alpha_i^8), \ 
C_6 = \tfrac{(4-\pi)^2}{6\pi^3}
= 3.9608\!\times\!10^{-3}.
\end{equation}
\revE
\rev{The $\alpha_i^2$ and $\alpha_i^4$ coefficients both vanish.}
\end{proposition}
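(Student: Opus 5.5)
The plan is to rewrite $\Delta H_i$ as a negentropy, which settles the sign claim at once, and then expand it in the cumulants of the standardized Skew-Normal. Substituting~\eqref{eq:Hsn} and the variance match~\eqref{eq:omega-xi}, $\omega_i^2=\sigma_i^2/(1-2\delta_i^2/\pi)$, gives the closed form
\begin{equation*}
\Delta H_i = \tfrac{1}{2}\log\!\bigl(1-2\delta_i^2/\pi\bigr) + \log 2 + \Psi(\alpha_i).
\end{equation*}
This expression is independent of $\sigma_i$ and $\xi_i$.

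\textbf{Sign claim.} Because $\mathrm{SN}(\xi_i,\omega_i,\alpha_i)$ has variance exactly $\sigma_i^2$, $\Delta H_i$ equals the Kullback--Leibler divergence from the SN law to the Gaussian with the same mean and variance. Hence $\Delta H_i\ge 0$, by the maximum-entropy property of the Gaussian under a variance constraint. Equality holds iff the SN density is Gaussian, and by~\eqref{eq:SN-density} that happens iff $\Phi(\alpha z)$ is constant in $z$, i.e.\ $\alpha_i=0$.

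\textbf{Structure of the expansion.}
\begin{itemize}
\item \emph{Evenness.} The reflection $Z\mapsto -Z$ maps $\mathrm{SN}(0,1,\alpha)$ to $\mathrm{SN}(0,1,-\alpha)$ and preserves entropy and variance, so $\Delta H$ is an even function of $\alpha$.
\item \emph{Smoothness.} I will show that $\Psi$ is real-analytic near $0$. The integrand $2\phi(z)\Phi(\alpha z)\log\Phi(\alpha z)$ has $\alpha$-derivatives bounded by polynomials in $|z|$ times $\phi(z)$, using $|\log\Phi(u)|\lesssim 1+u^2$. Dominated convergence then justifies term-by-term differentiation, so $\Delta H(\alpha)=\sum_k a_{2k}\alpha^{2k}$.
\item \emph{Goal.} It remains to show $a_2=a_4=0$ and $a_6=C_6$.
\end{itemize}

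\textbf{Main route: negentropy in cumulants.} For the standardized variable $W=(X-\E X)/\sigma_i$, expand the density ratio to the standard normal $\phi$ in Hermite polynomials (Gram--Charlier/Edgeworth):
\begin{equation*}
\frac{f_W}{\phi} = 1 + \frac{\kappa_3}{6}He_3 + \frac{\kappa_4}{24}He_4 + \cdots
\end{equation*}
Then I will expand $\int f_W\log(f_W/\phi)$ using $\log(1+u)=u-u^2/2+\cdots$ and Hermite orthogonality. The linear terms vanish, and the quadratic term gives the classical $J=\kappa_3^2/12+\kappa_4^2/48+\cdots$.

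The orders follow from the SN cumulants in powers of $\delta$ (equivalently $\alpha$, since $\delta=\alpha+O(\alpha^3)$):
\begin{itemize}
\item $\kappa_3=\kappa_3(\alpha)=C\delta^3(1-2\delta^2/\pi)^{-3/2}=C\alpha^3+O(\alpha^5)$.
\item $\kappa_r=O(\alpha^r)$ for $r\ge 4$, by the standard SN cumulant formulas in $\delta$.
\end{itemize}
Every term other than $\kappa_3^2/12$ is therefore $O(\alpha^8)$: the $\kappa_4^2$ term, products like $\kappa_3^2\kappa_4$, cubic terms of the logarithm, and the $O(\alpha^5)$ correction inside $\kappa_3$. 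Hence
\begin{equation*}
\Delta H = \frac{C^2}{12}\,\alpha^6 + O(\alpha^8) = \frac{2(4-\pi)^2}{12\,\pi^3}\,\alpha^6 + O(\alpha^8) = C_6\,\alpha^6 + O(\alpha^8),
\end{equation*}
which also shows that the $\alpha^2$ and $\alpha^4$ coefficients vanish. Intuitively, the SN is a Gaussian to second order, so no quadratic or quartic deviation survives.

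\textbf{Main obstacle and fallback.} The hard step is making the Edgeworth bookkeeping rigorous. The series is only asymptotic, and I must confirm that no mixed term enters below order $\alpha^8$. My first attempt will avoid Edgeworth entirely and Taylor-expand the closed form directly:
\begin{itemize}
\item Write $\log\Phi(u)=-\log2+\sqrt{2/\pi}\,u-u^2/\pi+c_3u^3+\cdots$ and $\Phi(\alpha z)=\tfrac12+\alpha z\phi(0)+\cdots$.
\item Integrate the product against $2\phi(z)$ using Gaussian moments, which yields the coefficients of $\Psi(\alpha)$ through $\alpha^6$.
\item Add the series of $\tfrac{1}{2}\log(1-2\delta^2/\pi)$ with $\delta^2=\alpha^2/(1+\alpha^2)$, and check cancellation at orders $\alpha^2$ and $\alpha^4$.
\end{itemize}
The remaining burden is the exact $\alpha^6$ coefficient, which I expect to be the error-prone part. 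I will cross-check it against the cumulant value $C^2/12$, and numerically against $3.9608\times10^{-3}$ by quadrature of $\Psi$ at small $\alpha$.
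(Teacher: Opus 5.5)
\textbf{Comparison with the paper.} Your non-negativity argument and the reduction $\Delta H_i=\tfrac12\log(1-2\delta_i^2/\pi)+\log 2+\Psi(\alpha_i)$ coincide with the paper's. Your fallback is exactly the paper's proof. The paper writes $\Psi(\alpha)=\E_{Z\sim\mathcal{N}(0,1)}[2g(\alpha Z)]$ with $g=\Phi\log\Phi$ and justifies differentiation under the integral with Birnbaum's Mills-ratio bound. It then computes $g''(0)$, $g^{(4)}(0)$ and $g^{(6)}(0)$ explicitly and adds the scale series.

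Your main route is genuinely different. Treating $\Delta H_i$ as the negentropy of the standardized variable explains \emph{why} the $\alpha^2$ and $\alpha^4$ coefficients vanish: matched mean and variance make the density perturbation start at order $\alpha^3$, and the divergence is quadratic in it. It also identifies $C_6=C^2/12$ structurally. Since the fit sets $\kappa_3(\alpha_i)=\gamma_i^{(3)}$ (after clipping), you get $\Delta H_i=(\gamma_i^{(3)})^2/12+O(\alpha_i^8)$ with no sixth-derivative calculation. The paper's route instead gives a finite, fully explicit computation, paid for with that bookkeeping and a derivative bound on $g$.

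\textbf{The gap in the main route.} As written, the main route is heuristic, for the reason you suspect, and one auxiliary claim is false. With $u=f_W/\phi-1$, you cannot integrate the series for $\log(1+u)$ termwise, because $u$ is not small in the tails. For $\alpha>0$, $1+u\to0$ as $w\to-\infty$, while $1+u$ grows like $\exp(\delta^2w^2/\pi)$, up to lower-order factors, as $w\to+\infty$.

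A clean repair uses the global bound $|(1+u)\log(1+u)-u-\tfrac12u^2|\le\tfrac12|u|^3$ for $u\ge-1$. Since $\int\phi u=0$, this gives $\Delta H=\tfrac12\|u\|_{L^2(\phi)}^2+O(\|u\|_{L^3(\phi)}^3)$. Projecting $u$ onto $\mathrm{He}_3$, whose coefficient is exactly $\kappa_3/6$, gives $\|u\|^2=\kappa_3^2/6+\|u-\tfrac{\kappa_3}{6}\mathrm{He}_3\|^2$. Everything then reduces to showing $u_\alpha=\tfrac{C}{6}\alpha^3\mathrm{He}_3+O(\alpha^4)$ in $L^2(\phi)\cap L^3(\phi)$; both norms are finite for $\delta^2<\pi/6$. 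That expansion involves $\Phi$ and a Gaussian exponential but neither $\log\Phi$ nor $1/\Phi$, so it needs no Mills-ratio estimate.

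\textbf{The false claim.} $\Psi$ is not real-analytic at $0$. The function $g$ has branch points at the complex zeros of $\Phi$, so its Taylor coefficients decay only geometrically, while $\E[Z^{2m}]=(2m-1)!!$ grows factorially. Hence the Taylor series of $\Psi$ at $0$ has zero radius of convergence. You only need $C^8$ regularity plus Taylor's theorem with remainder, as in the paper. For that, $|\log\Phi(u)|\lesssim1+u^2$ alone is not enough: derivatives of $g$ contain powers of $\phi/\Phi$, which require a Mills-ratio-type lower bound on $\Phi$.
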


\begin{proof}
\rev{By~\eqref{eq:omega-xi}, $f_i$ has variance $\sigma_i^2$.
Thus $\Delta H_i$ is its Kullback--Leibler divergence from the
Gaussian of equal mean and variance, proving non-negativity and
the equality condition.}
Write $\Delta H_i = \tfrac{1}{2}\log(1-2\delta_i^2/\pi)
+ \Phi_+(\alpha_i)$, where
$\Phi_+(\alpha) = \log 2 + \Psi(\alpha)$.
Both terms are even. With
$\delta^2 = \alpha^2-\alpha^4+\alpha^6+O(\alpha^8)$ and
$u = 2\delta^2/\pi$,
\begin{equation}\label{eq:scale-exp}
\begin{split}
\tfrac{1}{2}\log(1-u) = {}& -\frac{\alpha^2}{\pi}
 + \Bigl(\frac{1}{\pi}-\frac{1}{\pi^2}\Bigr)\alpha^4 \\
 & + \Bigl(-\frac{1}{\pi}+\frac{2}{\pi^2}
 -\frac{4}{3\pi^3}\Bigr)\alpha^6 + O(\alpha^8).
\end{split}
\end{equation}
For the shape term, write
$\Psi(\alpha) =
\E_{Z\sim\mathcal{N}(0,1)}[2g(\alpha Z)]$, where
$g(u) = \Phi(u)\log\Phi(u)$.
\rev{Birnbaum's bound
$\Phi(-s) \ge 2\phi(s)/(s+\sqrt{s^2+4})$ for $s\ge0$
gives $|g^{(k)}(t)| \le P_k(|t|)\phi(t)$ for
$1\le k\le8$, with $P_k$ a polynomial having nonnegative
coefficients. For $|\alpha|\le A$, the corresponding integrand
derivatives are bounded by
$2\phi(0)\phi(z)|z|^kP_k(A|z|)$, which is integrable.
Differentiation under the integral and Taylor's theorem therefore give}
$\Phi_+(\alpha) = \sum_{m=1}^{3}
\tfrac{\E[Z^{2m}]}{(2m)!}2g^{(2m)}(0)\alpha^{2m}
\rev{{}+O(\alpha^8)}$,
with $\Phi_+(0)=0$ and odd orders zero by symmetry.
\rev{Using $\E[Z^{2m}]/(2m)! = 1/(2^m m!)$,
$g''(0) = 1/\pi$,
$g^{(4)}(0) = 4/\pi^2-4/\pi$ and
$g^{(6)}(0) = 28/\pi-80/\pi^2+96/\pi^3$ yields
\begin{equation}\label{eq:shape-exp}
\begin{split}
\Phi_+(\alpha) = {}& \frac{\alpha^2}{\pi}
 + \Bigl(\frac{1}{\pi^2}-\frac{1}{\pi}\Bigr)\alpha^4 \\
 & + \Bigl(\frac{7}{6\pi}-\frac{10}{3\pi^2}
 +\frac{4}{\pi^3}\Bigr)\alpha^6 + O(\alpha^8).
\end{split}
\end{equation}
Adding~\eqref{eq:scale-exp} and~\eqref{eq:shape-exp} cancels
the $\alpha^2$ and $\alpha^4$ terms, leaving
\begin{equation}\label{eq:c6-sum}
\frac{1}{6\pi}-\frac{4}{3\pi^2}+\frac{8}{3\pi^3}
= \frac{1}{6\pi}\Bigl(1-\frac{4}{\pi}\Bigr)^{\!2}
= \frac{(4-\pi)^2}{6\pi^3} = C_6,
\end{equation}
which proves~\eqref{eq:sextic}.}
\end{proof}

\begin{figure}[t]
\centering
\includegraphics[width=\columnwidth]{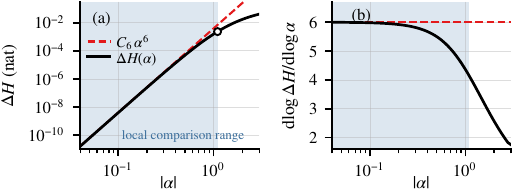}
\caption{\rev{Numerical verification of Proposition~\ref{prop:gap}.
(a) The entropy gap $\Delta H(\alpha)$ from direct quadrature of
$-f\log f$, compared with $C_6\alpha^6$.
(b) The local exponent
$\mathrm{d}\log\Delta H/\mathrm{d}\log\alpha$ tends to $6$.
Shading marks $|\alpha|\le1.1$, where $\Delta H=2.34\times10^{-3}$
nat against $7.0\times10^{-3}$ from the sextic form.}}
\label{fig:prop1}
\end{figure}

\begin{corollary}\label{cor:convergence}
\rev{Fix the prior precision~$\mQ$, the observed set $\mathcal{O}$
and a finite candidate set disjoint from $\mathcal{O}$.
Let $w_j = -\sum_{t:l_t=j}\ell_t''(\mu_j)$ be the aggregate
Poisson curvature and $w_{\min} = \min_{j\in\mathcal{O}}w_j$.
Along a sequence of Laplace posteriors with $w_{\min}\to\infty$
and candidate standard deviations
$\sigma_i \ge \sigma_{\min}>0$,
$|\gamma_i^{(3)}| = O(w_{\min}^{-2})$,
$|\alpha_i| = O(w_{\min}^{-2/3})$ and
$\Delta H_i = O(w_{\min}^{-4})$.
If the candidate Gaussian scores remain separated,
$\inf\min_{k\neq l}|H_{\mathrm{G},k}-H_{\mathrm{G},l}|=g_0>0$,
the $H_{\mathrm{SN}}$- and variance-induced pre-screen rankings
eventually coincide.}
\end{corollary}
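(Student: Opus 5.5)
The plan is to bound each factor of the cubic sum \eqref{eq:gamma3} separately. Then I pass the resulting bound through the algebraic inverse \eqref{eq:alpha-fit} and Proposition~\ref{prop:gap}, and finish with a perturbation argument on the rankings.

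\textbf{Cross-covariances.} Write $\mSig = (\mQ + \mathrm{diag}(\mathbf{w}))^{-1}$, with $\mathbf{w}$ supported on $\mathcal{O}$. For a candidate $i\notin\mathcal{O}$ and an observed $j\in\mathcal{O}$, I will show $|\Sigma_{ij}| = O(w_{\min}^{-1})$.

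Partition the indices into $\mathcal{O}$ and its complement $\mathcal{U}$. By the block inverse formula, $\Sigma_{\mathcal{U}\mathcal{O}} = -\Sigma_{\mathcal{U}\mathcal{U}}\mQ_{\mathcal{U}\mathcal{O}}(\mQ_{\mathcal{O}\mathcal{O}}+\mathrm{diag}(\mathbf{w}_{\mathcal{O}}))^{-1}$.

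Since $\mQ_{\mathcal{O}\mathcal{O}}\succ0$, the smallest eigenvalue of $\mQ_{\mathcal{O}\mathcal{O}}+\mathrm{diag}(\mathbf{w}_{\mathcal{O}})$ is at least $w_{\min}$. Its inverse therefore has norm at most $w_{\min}^{-1}$.

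The block $\Sigma_{\mathcal{U}\mathcal{U}} = (\mQ_{\mathcal{U}\mathcal{U}} - \mQ_{\mathcal{U}\mathcal{O}}(\cdot)^{-1}\mQ_{\mathcal{O}\mathcal{U}})^{-1}$ stays bounded. It is monotone in $\mathbf{w}$ and tends to $\mQ_{\mathcal{U}\mathcal{U}}^{-1}$. Hence $\|\Sigma_{\mathcal{U}\mathcal{O}}\| = O(w_{\min}^{-1})$.

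\textbf{Third-derivative weights.} For the Poisson likelihood, $\ell''=-N_e e^{x}$ and $\ell'''=-N_e e^{x}=\ell''$, evaluated at $\mu_j$. So $d^{(3)}_j = -w_j$ exactly. The sum in \eqref{eq:gamma3} then contains terms of size $w_j\,\Sigma_{ij}^3/\sigma_i^3$, which is not yet small enough.

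I need the sharper bound $|\Sigma_{ij}| = O(w_j^{-1})$ per column, rather than $O(w_{\min}^{-1})$. This follows from the same block formula, since row $j$ of $(\mQ_{\mathcal{O}\mathcal{O}}+\mathrm{diag}\,\mathbf{w})^{-1}$ is $O(w_j^{-1})$ by a diagonal-dominance or Neumann-series argument. This per-column scaling is the step I expect to be the main obstacle. If the clean column bound proves awkward, I will fall back on $w_j|\Sigma_{ij}|^3 \le C w_j (w_j^{-1})^3$ using a diagonal scaling $D^{1/2}$ of the observed block.

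With the column bound, each term is $O(w_j\cdot w_j^{-3}) = O(w_j^{-2})$. Using $\sigma_i\ge\sigma_{\min}$ and the finiteness of $\mathcal{O}$, this gives $|\gamma_i^{(3)}| = O(w_{\min}^{-2})$.

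\textbf{From $\gamma$ to $\alpha$ to $\Delta H$.} By \eqref{eq:alpha-fit}, for small $r_i$ we have $|\delta_i| \sim |r_i|^{1/3} = O(w_{\min}^{-2/3})$. Since $\alpha = \delta/\sqrt{1-\delta^2}$, we get $|\alpha_i|\sim|\delta_i|$, which gives the stated $\alpha$ rate. Proposition~\ref{prop:gap} then gives $\Delta H_i = C_6\alpha_i^6 + O(\alpha_i^8) = O(w_{\min}^{-4})$.

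\textbf{Rankings.} By construction $\omega_i$ matches the variance, so $H_{\mathrm{SN},i} = H_{\mathrm{G},i} - \Delta H_i$ with $\Delta H_i\ge 0$. The candidate set is finite, so $\max_i \Delta H_i\to0$.

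Once $\max_i\Delta H_i < g_0/2$, any pair with $H_{\mathrm{G},k} > H_{\mathrm{G},l}$ satisfies $H_{\mathrm{G},k}-H_{\mathrm{G},l}\ge g_0$. It follows that $H_{\mathrm{SN},k}>H_{\mathrm{SN},l}$, so the orderings agree.

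Finally, $H_{\mathrm{G}}$ is strictly monotone in $\sigma_i^2$. The $H_{\mathrm{SN}}$ ranking therefore eventually coincides with the variance ranking, and so do the top-$K$ pre-screens.
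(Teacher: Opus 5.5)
Your proof is correct and shares the paper's skeleton: cross-covariance decay, $d_j^{(3)}=-w_j$, termwise $O(w_j^{-2})$, then \eqref{eq:alpha-fit}, Proposition~\ref{prop:gap} and a separation argument. The difference is how you obtain the key bound $|\Sigma_{ij}|=O(w_j^{-1})$.

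The paper reads it off the $(j,i)$ entry of $(\mQ+\mathbf{W})\mSig=\mathbf{I}$. For $j\in\mathcal{O}$ and $i\notin\mathcal{O}$ this entry gives $w_j\Sigma_{ij}=-[\mQ\mSig]_{ji}$. Combined with $\|\mSig\|\le\|\mQ^{-1}\|$, it yields $|\Sigma_{ij}|\le\|\mQ\|\,\|\mQ^{-1}\|/w_j$ for every $\mathbf{W}\succeq\mathbf{0}$. This needs no partition, no Schur complement and no largeness condition on $w_{\min}$. So the step you flag as the main obstacle is a one-liner, and your first block bound $O(w_{\min}^{-1})$ is superseded and can be dropped.

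Your Neumann route does work. With $\mathbf{W}_{\mathcal{O}}=\mathrm{diag}(\mathbf{w}_{\mathcal{O}})$,
\[
(\mQ_{\mathcal{O}\mathcal{O}}+\mathbf{W}_{\mathcal{O}})^{-1}\mathbf{e}_j=(\mathbf{I}+\mathbf{W}_{\mathcal{O}}^{-1}\mQ_{\mathcal{O}\mathcal{O}})^{-1}\mathbf{e}_j/w_j ,
\]
and the inverse on the right has norm at most $(1-\|\mQ_{\mathcal{O}\mathcal{O}}\|/w_{\min})^{-1}$ once $w_{\min}>\|\mQ_{\mathcal{O}\mathcal{O}}\|$. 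This makes the decoupling of the observed block explicit, but it only yields an asymptotic constant, whereas the paper's constant holds for all $\mathbf{W}$.

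Your fallback, as written, presupposes the very column bound it is meant to replace. Done properly, the scaling by $\mathbf{W}_{\mathcal{O}}^{-1/2}$ gives $\|(\mQ_{\mathcal{O}\mathcal{O}}+\mathbf{W}_{\mathcal{O}})^{-1}\mathbf{e}_j\|\le(w_jw_{\min})^{-1/2}$. Hence $w_j|\Sigma_{ij}|^3=O(w_j^{-1/2}w_{\min}^{-3/2})=O(w_{\min}^{-2})$, which also suffices.

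Finally, your threshold $\max_i\Delta H_i<g_0/2$ is more conservative than needed. Because $\Delta H_l\ge0$, the paper's condition $\max_i\Delta H_i<g_0$ already preserves every ordering.
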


\begin{proof}
\rev{For Poisson observations, $\ell'''=\ell''$, so
$d_j^{(3)}=-w_j$. With
$\mathbf{W}=\mathrm{diag}(\mathbf{w})\succeq\mathbf{0}$,
$\bm{\Sigma}=(\mQ+\mathbf{W})^{-1}$ obeys
$\|\bm{\Sigma}\|\le\|\mQ^{-1}\|$.
For $j\in\mathcal{O}$ and $i\notin\mathcal{O}$,
$(\mQ+\mathbf{W})\bm{\Sigma}=\mathbf{I}$ gives
$w_j\Sigma_{ij}=-[\mQ\bm{\Sigma}]_{ji}$, hence
$|\Sigma_{ij}|\le C_Q/w_j$, where
$C_Q=\|\mQ\|\,\|\mQ^{-1}\|$.
Each term of~\eqref{eq:gamma3} therefore satisfies
$|d_j^{(3)}|\,|\Sigma_{ij}|^3/\sigma_i^3
\le C_Q^3/(\sigma_{\min}^3w_j^2)
\le C_Q^3/(\sigma_{\min}^3w_{\min}^2)$.
Since $\mathcal{O}$ is fixed, the first rate follows,
\eqref{eq:alpha-fit} and Proposition~\ref{prop:gap} give the others.
The rankings coincide once $\max_i\Delta H_i<g_0$.

The assumptions matter: increasing only one $w_j$ need not
eliminate the other terms in~\eqref{eq:gamma3}, and tied Gaussian
scores can be separated by unequal entropy gaps.
The result concerns pre-screening only, not agreement of the
covariance lookahead~\eqref{eq:DeltaU-cancel} with MaxVar.}
\end{proof}

\subsection{Sherman--Morrison Lookahead and Two-Stage Selection}

The UAV scores reachable candidates by the summed marginal entropy reduction under a frozen-curvature, fixed-shape surrogate,
$\Delta U_{\mathrm{SN}}(j)$. Rather than refitting INLA for each candidate,
we use the rank-one precision update
$\mQ_*^{\mathrm{new}} = \mQ_* + c_j\mathbf{e}_j\mathbf{e}_j^\top$%
\rev{, with $c_j = -\ell''(\mu_j)$ evaluated at the current mean for
\emph{one} new observation, positive even where $w_j=0$}.
Sherman--Morrison gives
\begin{equation}\label{eq:SM}
\sigma_i^{2,\mathrm{new}} = \sigma_i^2
- \frac{c_j\,\Sigma_{ij}^2}{1 + c_j\,\sigma_j^2},
\end{equation}
requiring $O(N)$ per candidate from one covariance column.

Holding $\alpha_i$ fixed during lookahead keeps $\omega_i/\sigma_i$
constant, so the shape terms cancel and
\begin{equation}\label{eq:DeltaU-cancel}
\Delta U_{\mathrm{SN}}(j)
= \tfrac{1}{2}\sum_{i=1}^N
\log(\sigma_i^2/\sigma_i^{2,\mathrm{new}}),
\end{equation}
the Gaussian criterion\rev{, summed over the $N$ ROI cells, buffer
nodes enter the posterior but not the objective}.
We retain the top $K$ reachable candidates by $H_{\mathrm{SN}}$
(or all if $|\mathcal{A}_t|\le K$), then maximize
\eqref{eq:DeltaU-cancel} over them.
\rev{Skewness acts only through this pre-screen,
$K=|\mathcal{A}_t|$ removes its effect on the selected action.}

For $\sigma_k^2>\sigma_j^2$, the pre-screen reverses their Gaussian
ordering only if
$\Delta H(\alpha_k)-\Delta H(\alpha_j)
> \tfrac12\log(\sigma_k^2/\sigma_j^2)$.
\rev{Using the exact gap from~\eqref{eq:Hsn}, the largest scanned
shape, $|\alpha|=1.912$, gives $1.538\times10^{-2}$\,nat,
permitting reversal only for variance differences below $3.12\%$.
The local sextic approximation instead gives $0.1935$\,nat,
exceeding the exact gap's supremum of $0.1870$\,nat and therefore
cannot be used at that shape.}
For smaller shapes, $\Delta H$ decreases by approximately a factor
of $64$ per halving of $|\alpha|$
\rev{within the sextic regime} (Fig.~\ref{fig:prop1}).

\subsection{Algorithm and Cost}

Algorithm~\ref{alg:main} specifies the warm-started Newton loop.
\rev{Its scaled sup-norm criterion uses $\epsilon=10^{-6}$ and
controls the last increment, not the distance to the mode.
At $k_{\max}$, the current iterate is retained and flagged, without
retry, fallback or exclusion.
\revb{Non-finite posterior runs are instead removed with their paired
runs, exclusion and cap-hit counts are reported with each experiment.}
The covariance uses the precision assembled one iterate before acceptance,
the stopping rule controls this lag locally but not at the cap.
With a proper GMRF prior, both likelihoods give a strictly concave
log-posterior and hence a unique mode, although the stopping rule
does not certify that it has been reached.}

\rev{The main synthetic grid and MaxMI arm predate this setting and use
$\varrho=0.8$, $k_{\max}=50$ (Table~\ref{tab:params}).
The tested fixed-data estimates are numerically unchanged, but the
effect on adaptive trajectories is not bounded.}
No step reaches the cap in the main synthetic table's $900$ runs,
affected-run rates are at most $2.1\%$ in other production jobs,
except $15.0\%$ on the $64{\times}64$ arm.

\begin{algorithm}[t]
\caption{Adaptive sampling with INLA and SN pre-screen.}
\label{alg:main}
\small
\begin{algorithmic}[1]
\REQUIRE Grid $\{s_i\}$, prior $(\vmu_0,\mQ)$, initial cell $l_1$,
         budget $T_{\max}$, $K_s$, $K$, $(v_{\max},\Delta t)$,
         $(\varrho,\epsilon,k_{\max})$
\STATE $\alpha_i \leftarrow 0$, $\delta_i \leftarrow 0$,
       $\Psi_i \leftarrow -\log 2$,
       $\vmu \leftarrow \vmu_0$,
       $d_{\max} \leftarrow v_{\max}\Delta t$
\FOR{$t = 1$ \TO $T_{\max}$}
  \STATE Observe $y_t$ at $l_t$,\quad
         $\vmu^{(0)} \leftarrow \vmu$, $k \leftarrow 0$
  \REPEAT
    \STATE Compute $w_i,b_i$ at $\vmu^{(k)}$
           via~\eqref{eq:ci-bi},
           $\mQ_* \leftarrow \mQ+\mathrm{diag}(\mathbf{w})$,
           $\tilde{\vmu} \leftarrow \mQ_*^{-1}\mathbf{b}$
    \STATE $\vmu^{(k+1)} \leftarrow
           (1-\varrho)\vmu^{(k)}+\varrho\tilde{\vmu}$,\quad
           $k \leftarrow k+1$
  \UNTIL{$\|\vmu^{(k)}-\vmu^{(k-1)}\|_\infty
          < \epsilon\max(1,\|\vmu^{(k-1)}\|_\infty)$
          \OR $k=k_{\max}$}
  \STATE $\vmu \leftarrow \vmu^{(k)}$,
         \rev{retained and counted as a cap hit if $k=k_{\max}$},
         $\mSig \leftarrow \mQ_*^{-1}$
         \rev{(last assembled $\mQ_*$)},
         $\sigma_i \leftarrow \sqrt{\Sigma_{ii}}$
  \IF{$t=1$ \OR $t \bmod K_s=0$}
    \STATE Compute $\gamma_i^{(1)},\gamma_i^{(3)}$
           via~\eqref{eq:gamma3} and its accompanying expression
    \STATE Clip $|\gamma_i^{(3)}|$ below $\Gamma_{\max}$,
           $r_i \leftarrow \gamma_i^{(3)}/C$,
           update $\delta_i$ via~\eqref{eq:alpha-fit},
           $\alpha_i \leftarrow \delta_i/\sqrt{1-\delta_i^2}$,
           cache $\Psi_i$
  \ENDIF
  \STATE Compute $H_{\mathrm{SN}}(i)$ via~\eqref{eq:Hsn},
         with $\omega_i$ from~\eqref{eq:omega-xi}
  \STATE $\mathcal{A}_t \leftarrow
         \{j \text{ in ROI}:\|s_j-s_{l_t}\|_2\le d_{\max}\}$
         \quad\eqref{eq:reachable}
  \STATE Candidates $\leftarrow$ top-$\min(K,|\mathcal{A}_t|)$
         cells of $\mathcal{A}_t$ by $H_{\mathrm{SN}}$
  \STATE $l_{t+1} \leftarrow
         \argmax_{j\in\mathrm{Candidates}}\Delta U_{\mathrm{SN}}(j)$
         via~\eqref{eq:SM}, \eqref{eq:DeltaU-cancel}
\ENDFOR
\end{algorithmic}
\end{algorithm}

\begin{table}[t]
\centering
\caption{Per-step selection cost.}
\label{tab:complexity}
\footnotesize
\begin{tabular}{lcc}
\toprule
Strategy & Selection cost & Skewness \\
\midrule
Random & $O(1)$ & No \\
MaxVar & $O(|\mathcal{A}_t|)$ & No \\
MaxMI & $O(|\mathcal{A}_t|)$ & No \\
MaxEnt-G (naive) & $O(kN^3|\mathcal{A}_t|)$ & No \\
MaxEnt-G (SM) & $O(NK)$ & No \\
MaxEnt-SN (proposed) & $O(NK)+O(NM)/K_s$ & Yes \\
\bottomrule
\end{tabular}\\
\raggedright\scriptsize
$N$: grid size, $K$: pre-screen size, $M$: observed cells,
$K_s$: refresh period, $k$: Newton iterations
(median $4$ at $\varrho=1.0$, cap $100$).
All rules share an $O(kN^{3/2})$ inner solve and
\rev{dense covariance inversion requiring $O(N^3)$ arithmetic
and $O(N^2)$ memory in the current implementation.}
\end{table}

Sherman--Morrison reduces lookahead from $O(kN^3|\mathcal{A}_t|)$
to $O(NK)$, the SN rule additionally incurs an amortized
$O(NM)/K_s$ refresh cost (Table~\ref{tab:complexity}).
Whole-step runtimes, including the shared posterior update, are
about $1.3$\,s on $32{\times}32$ and $11$--$15$\,s on $64{\times}64$
(peak memory $2.1$\,GB), one single-process run per core on AMD EPYC
9655 nodes of the KAUST Ibex cluster (NumPy 2.4.6, SciPy 1.17.1, dense
factorization, no GPU).
\rev{Computation precedes the selected hop and adds to the measurement
cycle rather than changing $\mathcal{A}_t$, these runtimes are about
$0.6$ and $6$ times the $2$\,s travel allowance, and real-time
operation is not claimed.
Elapsed mission time is not simulated.
Sparse selected inversion~\cite{Rue2009} does not supply the
covariance columns required by~\eqref{eq:SM} and~\eqref{eq:gamma3},
obtaining them by up to $K+M$ sparse solves per step is not benchmarked.}

\section{Results}
\label{sec:results}

\subsection{Setup}
Synthetic experiments use the $32{\times}32$ GMRF of
Table~\ref{tab:params}, $T=200$, $R=50$ paired seeds and
$N_e \in \{1,2,5,10,20,50,100\}$, with a common starting cell.
Ray-traced validation uses $19$ RadioMapSeer
maps~\cite{Yapar2022}, released with~\cite{Levie2021},
at $T=70$ and five seeds, giving $95$ paired runs per rule.

% \rev{The $5.9$\,GHz fields~\cite[Table~I]{Yapar2022} are paired
% with the $868$\,MHz-inspired counting model of
% Section~\ref{sec:iot-instantiation}, no frequency-transfer calibration
% is performed. The experiment tests the model on shadowed urban
% spatial structure, not an $868$\,MHz deployment.
% We define the split as ids $0$--$560$ for training and
% $561$--$700$ for testing. Of the $40$ test maps sampled every third
% id from $561$ to $678$, $19$ satisfy the retention thresholds:
% informative fraction ${\ge}\,0.90$, censored fraction
% ${\le}\,0.30$ and $1$--$99$ percentile range ${\le}\,40$\,dB.
% Their ids are $585$, $588$, $591$, $597$, $603$, $606$, $609$,
% $612$, $618$, $627$, $639$, $642$, $645$, $648$, $657$, $663$,
% $669$, $672$ and $675$.
% The constants $P_0=-33.66$\,dBm, $n=3.8995$, residual
% s.d.\ $10.025$\,dB and $P_{\mathrm{ref}}=-120.65$\,dBm are fitted
% on training maps and frozen.
% Runs use $N_e=1$, the first transmitter and a $19{\times}19$
% window on a $5$\,m lattice placed away from it.
% Noise-floor cells retain their floor values in planning and scoring.}
\rev{The $5.9$\,GHz fields~\cite[Table~I]{Yapar2022} are paired
with the $868$\,MHz-inspired counting model of
Section~\ref{sec:iot-instantiation}, without frequency-transfer
calibration. The experiment therefore tests shadowed urban
spatial structure, not an $868$\,MHz deployment.
Maps $0$--$560$ form the training set and $561$--$700$ the test set.
Of $40$ test maps sampled every third id from $561$ to $678$,
$19$ meet the retention thresholds: informative fraction
${\ge}\,0.90$, censored fraction ${\le}\,0.30$, and
$1$--$99$ percentile range ${\le}\,40$\,dB.\footnote{Retained
map ids: $585$, $588$, $591$, $597$, $603$, $606$, $609$,
$612$, $618$, $627$, $639$, $642$, $645$, $648$, $657$,
$663$, $669$, $672$, and $675$.}
Model constants are fitted on training maps and frozen.\footnote{
$P_0=-33.66$\,dBm, $n=3.8995$, residual s.d.\ $10.025$\,dB,
and $P_{\mathrm{ref}}=-120.65$\,dBm.}
Runs use $N_e=1$, the first transmitter, and a $19{\times}19$
window on a $5$\,m lattice placed away from it.
Noise-floor cells retain their floor values in planning and scoring.}

\rev{We compare Random, lawnmower, MaxVar, MaxMI, GP-UCB,
MaxEnt-G and MaxEnt-SN, MaxMI is synthetic-only.
Accuracy is NMSE $= \sum_j(\mu_j-x_j)^2/\sum_j(x_j-\bar{x})^2$
over the ROI. Residual NMSE replaces the denominator by
$\sum_j(r_j-\bar{r})^2$, where $r_j=x_j-m_j$, normalizing by
variation about the known trend.}

\revb{\emph{Statistics.} Paired contrasts are relative differences
$(\mathrm{NMSE}_A-\mathrm{NMSE}_B)/\mathrm{NMSE}_B$ per paired run,
reported as medians unless stated, $95\%$ intervals are percentile
intervals from $20\,000$ bootstrap resamples of maps or footprint
clusters carrying their seeds (of seeds on the synthetic corpus), and
synthetic tests are sign-flip permutation tests ($10^4$). The MDE is
$2.80$ standard errors of the paired mean ($80\%$ power), a design
quantity rather than a significance threshold, a rounded zero MDE
reflects identical trajectories, and no equivalence test is made. Holm
adjustment applies only to the eight misspecified cells of
Section~\ref{sec:mismatch}. The nine-cell ablation is
$\sigma_P\in\{2,3,6,8\}$\,dB${}\times{}N_e\in\{1,2\}$ at $80$ seeds
plus $(6\,\mathrm{dB},5)$ at $50$. The lattice adds a $10$-cell buffer
to each ROI, $N$ counts ROI cells.}

The lawnmower uses a boustrophedon raster with stride
$\operatorname{round}(\sqrt{N_{\mathrm{ROI}}/T})$, capped by
$\lfloor d_{\max}/h\rfloor$ to respect $\mathcal{A}_t$.
On $32{\times}32$, this gives stride $2$, the better of two
tested strides.
MaxMI selects
$\argmax_{j\in\mathcal{A}_t}\tfrac12\log(1+c_j\sigma_j^2)$.
\rev{Under non-Gaussian likelihoods, this Gaussian-model expression
is a curvature-weighted surrogate for $I(x_j,y_j)$,}
it coincides with MaxVar when $c_j$ is constant.
MaxEnt-G isolates the skewness pre-screen by using the same
inference procedure, reachable set and lookahead as MaxEnt-SN,
but Gaussian pre-screening entropy.
\rev{Unless otherwise stated, GP-UCB uses
$\kappa_{\mathrm{UCB}}=20$, justified in Section~\ref{sec:ucb}.}
Fig.~\ref{fig:nmse_vs_ne} compares synthetic reconstruction accuracy,
Table~\ref{tab:asr} reports network-level metrics.

\begin{figure}[t]
\centering
\includegraphics[width=\columnwidth]{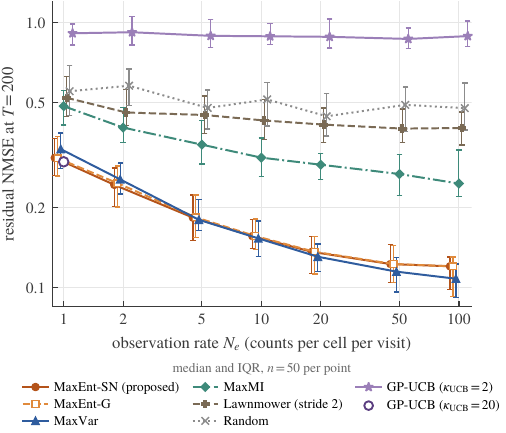}
\caption{Synthetic residual NMSE versus $N_e$ at
$\sigma_P=6$\,dB and $T=200$: medians and interquartile ranges
over $50$ paired seeds, with horizontal offsets up to $3.5\%$.
GP-UCB uses $\kappa_{\mathrm{UCB}}=2$ across the sweep,
the open marker shows $\kappa_{\mathrm{UCB}}=20$,
evaluated only at $N_e=1$.}
\label{fig:nmse_vs_ne}
\end{figure}

\subsection{Adaptive versus Non-Adaptive Sampling}
\label{sec:adaptive}

At $T=70$, posterior-informed sampling outperforms non-adaptive sampling.
\rev{Paired by map and seed, Random and the lawnmower have median
residual-NMSE penalties of $18.01\%$ and $14.01\%$ relative to
MaxEnt-SN, with higher error in $89$ and $79$ of $95$ runs.
Their $95\%$ intervals are $[+14.77,+23.66]\%$ and
$[+8.58,+20.36]\%$.
All ray-traced intervals use $20\,000$ map-cluster bootstrap
resamples\revb{, over the $19$ retained maps unless stated otherwise,}
preserving each map's five paired seeds.}
The synthetic separation is larger and emerges within the first
$30$ samples (Fig.~\ref{fig:nmse_vs_ne}).

Differences within the informative family are unresolved:
the paired relative-effect intervals for MaxVar, GP-UCB and
MaxEnt-G against MaxEnt-SN are $[-4.58,+6.25]\%$,
$[-3.41,+5.38]\%$ and $[0.00,0.00]\%$.
\rev{At higher observation rates, MaxVar overtakes MaxEnt-SN,
while MaxEnt-G remains statistically indistinguishable from it.}
Thus, the consistent separation is from non-adaptive sampling,
not between informative rules.

MaxMI adds the curvature weight $c_j$ to MaxVar's criterion.
Under Poisson counts, strongly varying $c_j$ favors high-$\mu$
cells: MaxMI \rev{has higher NMSE than} MaxVar but lower NMSE
than Random.
Under Bernoulli detections, $c_j$ varies mildly and no difference
from the other informative rules is resolved, consistent with,
but not establishing, the constant-curvature equivalence. The informative rules fly similar distances, substantially farther
than Random and the raster. The raster is cheapest in distance,
not error\rev{ (no energy ordering is inferred, Section~\ref{sec:kinematics})}.

\subsection{\rev{Prior Width Governs Calibration}}
\label{sec:calib}

Two grids of $570$ paired runs ($19$ maps, six rules, five seeds)
differ only in prior shadowing standard deviation:
$6.0$\,dB versus the training-fitted residual scale of $10.025$\,dB.
\revb{Both were fixed before testing. Section~\ref{sec:width}
extends the comparison to $4$--$16$\,dB on the same test maps
as a sensitivity analysis, not a setting-selection procedure.}

\begin{table}[t]
\centering
\caption{\rev{Prior-width intervention on the ray-traced corpus:
$570$ paired runs per width. Pooled rows aggregate all six rules, Table~\ref{tab:sweep} reports MaxEnt-SN alone. The coverage shift has a
$95\%$ map-cluster bootstrap interval of $[+4.99,+9.96]$
percentage points. Non-adaptive trajectories are identical across
widths. The exact two-sided sign-test $p$-value tests $16/19$
against one-half, not the change from $14/19$.}}
\label{tab:calib}
\footnotesize
\setlength{\tabcolsep}{2.8pt}
\begin{tabular}{lccc}
\toprule
Quantity & $\sigma_P{=}6.0$\,dB & $\sigma_P{=}10.025$\,dB & change \\
\midrule
mean $95\%$ CI coverage & $0.7585$ & $0.8314$ & \rev{$+7.29$ pts} \\
deficit against nominal & $19.15$ pts & $11.86$ pts & $-38.1\%$ \\
spread across six rules & $0.8$ pts & $0.5$ pts & --- \\
pool median NMSE, 6 rules & $0.8114$ & $0.7396$ & $-8.9\%$ \\
Random NMSE & \multicolumn{2}{c}{same flight, $882.7$\,m} & $-3.0\%$ \\
lawnmower NMSE & \multicolumn{2}{c}{same flight, $705.8$\,m} & $-8.7\%$ \\
windows beating a constant & $14/19$ & \rev{$16/19$ ($p{=}0.0044$)} & \rev{---} \\
\bottomrule
\end{tabular}
\end{table}

\rev{Widening the prior improves coverage across all six rules}%
\revb{ by $+6.65$ to $+7.69$ percentage points}%
\rev{ and reduces pooled NMSE (Table~\ref{tab:calib}).}
\revb{Coverage uses pointwise Laplace intervals for the latent field
over all ROI cells, including observed cells.}
The non-adaptive arms improve on identical trajectories,
establishing an estimator-side gain independent of adaptive sampling.
\rev{The interval score also improves: coverage rises at nominal levels
from $50$ to $99\%$, and the $95\%$ interval score~\cite{Gneiting2007},
$\mathrm{IS}_{0.05}=(u-l)+40\,[(l-x)^+ + (x-u)^+]$ in dB (width plus $40$
times the miss distance), falls by $37.6$\,dB despite a $6.9$\,dB
increase in width.}

\rev{The wider prior does not achieve calibration: coverage remains
$11.9$ percentage points below nominal at $95\%$ and below nominal
at all tested levels from $80\%$ upward ($0.88$ at $99\%$).}
Unrepresented spatial structure is a possible cause,
\rev{but covariance, likelihood and approximation errors remain
alternatives (Section~\ref{sec:limits}).}

\subsection{\revb{Choosing the Prior Width: Sweep and Online Estimation}}
\label{sec:width}

\begin{table}[t]
\centering
\caption{\revb{Prior-width sweep on retained ray-traced maps:
MaxEnt-SN, $95$ paired runs per width. NMSE is the pooled median,
coverage and interval score are means. $\Delta$NMSE is the median paired relative change and $\Delta$cov.\ the mean paired change, both against $6.0$\,dB, with map-cluster $95\%$ intervals excluding zero.
Random's penalty is the median paired relative NMSE difference
against MaxEnt-SN at each width.}}
\label{tab:sweep}
\scriptsize
\setlength{\tabcolsep}{2.4pt}
\resizebox{\columnwidth}{!}{%
\begin{tabular}{lccccccc}
\toprule
$\sigma_P$ (dB) & $4$ & $6.0$ & $8$ & $10.025$ & $12$ & $14$ & $16$ \\
\midrule
resid.\ NMSE & $0.886$ & $0.792$ & $0.761$ & $0.730$ & $0.695$ & $0.678$ & $0.673$ \\
coverage & $0.684$ & $0.756$ & $0.794$ & $0.833$ & $0.865$ & $0.885$ & $0.902$ \\
IS$_{95}$ (dB) & $152.8$ & $122.7$ & $103.6$ & $86.0$ & $74.7$ & $67.6$ & $62.0$ \\
$\Delta$NMSE (\%) & $+11.3$ & --- & $-5.0$ & $-12.0$ & $-14.4$ & $-17.3$ & $-21.0$ \\
$\Delta$cov.\ (pt) & $-7.2$ & --- & $+3.8$ & $+7.7$ & $+11.0$ & $+12.9$ & $+14.6$ \\
Random (\%) & $+9.9$ & $+13.5$ & $+17.7$ & $+18.0$ & $+18.9$ & $+21.5$ & $+23.6$ \\
\bottomrule
\end{tabular}}
\end{table}

\begin{table}[t]
\centering
\caption{\revb{Online estimation on retained ray-traced maps:
MaxEnt-SN, $95$ paired runs per row. Final width and residual NMSE
are medians, coverage is a mean. Changes are paired against fixed
$10.025$\,dB: median relative NMSE with map-cluster $95\%$ intervals,
and mean coverage change. Width estimation starts at $6.0$\,dB,
offset-only rows fix the width at $10.025$\,dB.
The true-offset row is an oracle benchmark.}}
\label{tab:eb}
\scriptsize
\setlength{\tabcolsep}{2.2pt}
\begin{tabular}{lccccc}
\toprule
Prior & $\hat\sigma_P$ (dB) & NMSE & cov. & $\Delta$NMSE (\%) & $\Delta$cov.\ (pt) \\
\midrule
fixed $6.0$\,dB & --- & $0.792$ & $0.756$ & $+13.7$ $[9.3, 16.9]$ & $-7.7$ \\
fixed $10.025$\,dB & --- & $0.730$ & $0.833$ & ref. & ref. \\
fixed $16$\,dB & --- & $0.673$ & $0.902$ & $-9.2$ $[-13.8, -6.7]$ & $+6.9$ \\
width online & $6.50$ & $0.807$ & $0.777$ & $+7.7$ $[4.4, 11.1]$ & $-5.6$ \\
offset online & $10.025$ (fixed) & $0.737$ & $0.847$ & $-1.9$ $[-4.8, +0.2]$ & $+1.4$ \\
both online & $5.36$ & $0.835$ & $0.760$ & $+10.9$ $[6.4, 17.6]$ & $-7.3$ \\
true offset & $10.025$ (fixed) & $0.711$ & $0.846$ & $-2.5$ $[-5.1, +0.0]$ & $+1.3$ \\
\bottomrule
\end{tabular}
\end{table}

\revb{\emph{A wider sweep.}
Across the seven widths in Table~\ref{tab:sweep}, residual NMSE,
coverage at every tested nominal level and interval score improve
monotonically from $4$ to $16$\,dB.
Thus, the training residual is not an optimum on this corpus:
no turning point is observed, and the sweep is a sensitivity
analysis, not a selection rule.
Section~\ref{sec:rw} shows a different curve on measured fields.
Random's penalty grows from $9.9\%$ to $23.6\%$, while
within-family differences remain unresolved:
MaxVar and GP-UCB are within $\pm3\%$ with intervals containing
zero, and MaxEnt-G has a median difference of $0.00\%$.
Both non-adaptive trajectories remain identical across widths.
The re-run $6.0$ and $10.025$\,dB settings reproduce the stored
trajectories exactly except for MaxVar, no ray-traced run is
non-finite, and cap hits are at most $4/95$ retained and $15/105$
excluded runs per cell.}

\revb{
The $21$ excluded maps contain more building area and are harder
to reconstruct: residual NMSE is $0.915$ at $10.025$\,dB,
versus $0.730$ on retained maps, falling to $0.759$ at $16$\,dB.
Coverage rises from $0.54$ at $6.0$\,dB to $0.77$ at
$10.025$\,dB and $0.89$ at $16$\,dB.
The first increase gives $+23.7$ percentage points
$[+12.5,+36.2]$ and $-21.8\%$ NMSE $[-26.3,-17.0]$.
At $10.025$\,dB, Random and raster penalties are
$20.3\%$ $[13.8,27.0]$ and $19.1\%$ $[9.9,38.1]$,
informative rules remain within $\pm1.3\%$.
The screen therefore selected easier windows without changing
the observed separation from non-adaptive sampling.}

\revb{\emph{Online width estimation.}
Every $K_s=10$ steps, we estimate $\sigma_P$ from the accumulated
counts by maximizing the Laplace marginal likelihood,
\begin{equation}\label{eq:eb}
\begin{split}
\log p(\vy\mid\sigma_P) \approx{}&
\sum_t \ell_t(\hat{x}_{l_t})
-\tfrac{1}{2}(\hat\vx-\mathbf{m})^\top
\mQ(\hat\vx-\mathbf{m})\\
&+\tfrac{1}{2}\log|\mQ|-\tfrac{1}{2}\log|\mQ_*|,
\end{split}
\end{equation}
where $\hat\vx$ is the mode under $\mQ(\sigma_P)$.
We search $3$--$16$\,dB at $1$\,dB spacing with parabolic
refinement and rescale the prior precision before the next update.
This empirical-Bayes procedure ($14$ grid evaluations per refresh)
is evaluated offline, real-time feasibility is not established.

On model-generated synthetic fields, the median width-estimation
error is $0.3$--$0.7$\,dB, residual NMSE stays within $0.04$
and coverage within $3$ percentage points of the true-prior runs.
On ray-traced fields, both $6.0$ and $10.025$\,dB starts converge
to a median near $6.5$\,dB ($[5.4,7.3]$ and $[5.4,7.4]$).
This estimate does not recover the predictive gains of wider
priors (Tables~\ref{tab:sweep}--\ref{tab:eb}), consistent with
model mismatch rather than initialization alone.}

\revb{\emph{Learning a mean offset.}
We also estimate a window-level offset $c$,
$\mathbf{m}\to\mathbf{m}+c\mathbf{1}$, by coordinate ascent
on~\eqref{eq:eb}, at fixed width or jointly with it.
True offsets span $-13.5$ to $+14.1$\,dB (median $-4.0$\,dB),
the median estimation error is $1.3$\,dB.
Even the true offset improves residual NMSE by only $2.5\%$
and coverage by $1.3$ points, against $9.2\%$ and $6.9$ points
from widening to $16$\,dB, the learned offset recovers about two
thirds of that gain, and joint estimation reduces the width to
$5.4$\,dB (Table~\ref{tab:eb}).

A window-level mean error does not explain the wider
prior's advantage. Within-window structure remains a candidate:
$3$--$29\%$ of retained cells lie at the ray tracer's noise floor,
which the stationary Gaussian residual does not explicitly model.
The tested marginal-likelihood procedure does not recover the
width associated with better prediction and coverage here,
hierarchical, predictive and robust online estimators remain untested.
Specifications and training-survey residuals provide candidate
widths, with the latter already preferred on the measured corpus
(Section~\ref{sec:rw}), but no selection rule is validated across
independent sites.}

\subsection{\rev{What Does Not Separate: Characterized Nulls}}
\label{sec:null}

\rev{Seven ablations comprise four tested contrasts with no significant
effect and three untested sensitivity probes (Table~\ref{tab:nulls}).
They reuse seeds and designs, so are not independent confirmations.}

\emph{The skewness term.}
MaxEnt-G isolates the skewness pre-screen while retaining the same
inference and lookahead.
\rev{At the two ray-traced prior widths, the correction reorders
candidates in $72$ and $80$ of $95$ runs but changes actions on only
$0.34\%$ and $0.64\%$ of steps. Identical trajectories in $78$
and $65$ runs determine the zero median accuracy differences.
Across nine $(\sigma_P,N_e)$ cells, all-run paired means lie within
$\pm1.0\%$, seed-bootstrap intervals contain zero both overall and
among divergent runs. The $80\%$-power MDE is $0.8$--$2.5\%$.
Ray-traced paired means have magnitude about $0.3\%$, with
MDE $1.0$--$1.1\%$.}

\emph{Why the effect is small.}
The sextic gap (Proposition~\ref{prop:gap}), its decay
(Corollary~\ref{cor:convergence}) and the rare action changes
support, but do not prove, the absence of a detectable benefit
in the tested regime.

\emph{Refresh period and pre-screen size.}
At $N_e=10$, varying $K_s$ from $1$ to $50$ changes NMSE by
$0.003$, and varying $K$ from $5$ to $50$ by $0.016$.
We therefore retain $K_s=10$, $K=20$.
\rev{A load switch at $t=100$ ($N_e:1\leftrightarrow20$) also leaves
the effect unresolved for $K_s\in\{1,10,50\}$, the switch itself
raises NMSE up to $8.2$ times ($32.5\%$ Newton non-convergence under
rising load) because stored counts are reinterpreted at the current
rate, a history-misspecification test rather than exposure-aware
tracking.}

\begin{table}[t]
\centering
\caption{\rev{Seven ablations: four tested contrasts with no significant
effect and three untested sensitivity probes.}}
\label{tab:nulls}
\footnotesize
\setlength{\tabcolsep}{3pt}
\begin{tabular}{llcc}
\toprule
Ablation & Counterfactual & Effect & MDE / test \\
\midrule
SN correction, ray-traced & MaxEnt-G
& \rev{$0.3\%$} & \rev{$1.0$--$1.1\%$} \\
SN correction, nine cells & MaxEnt-G
& \rev{$1.0\%$} & $0.8$--$2.5\%$ \\
refresh period $K_s{\in}[1,50]$ & $K_s{=}10$
& $0.003$ abs. & --- \\
pre-screen $K{\in}[5,50]$ & $K{=}20$
& $0.016$ abs. & --- \\
refresh period, load switch & $K_s{=}10$
& $0.0\%$ & $0.0$--$4.9\%$ \\
grid $64{\times}64$ & $32{\times}32$
& \rev{not tested} & --- \\
SN correction, Bernoulli & MaxEnt-G
& $0.00\%$ & $p=0.56$ \\
\bottomrule
\end{tabular}
\end{table}

\emph{Bernoulli observations.}
Under Model~A, the correction remains unresolved against MaxEnt-G
(median $0.00\%$, $p=0.56$).
No rule clearly improves on prior-only NMSE after $T=200$,
making this a low-information consistency check.

% \subsection{\rev{The GP-UCB Exploration Coefficient}}
% \label{sec:ucb}

% \rev{GP-UCB does not fail as a class, its exploration coefficient must be set for a
% reconstruction objective rather than an optimization one.} Fig.~\ref{fig:ucb_cliff} sweeps
% $\kappa_{\mathrm{UCB}}\in\{1,2,3,5,10,20,50\}$ over four shadowing levels and two observation
% rates: accuracy improves sharply between $\kappa_{\mathrm{UCB}}=3$ and $10$ and is flat beyond.
% We use $\kappa_{\mathrm{UCB}}=20$, inside the stable regime rather than at the literature default
% $2$, there GP-UCB is comparable to the other informative rules.
% \begin{figure}[t]
% \centering
% \includegraphics[width=0.75\columnwidth]{fig_ucb_cliff.pdf}
% \caption{\rev{GP-UCB exploration sweep: median paired NMSE change
% between adjacent $\kappa_{\mathrm{UCB}}$ settings, $20$ seeds per cell.
% Solid: $N_e=1$, dashed: $N_e=2$.
% Filled markers indicate improvement on at least $15$ seeds.
% All tested conditions show a drop at $3\to5$ and flatten beyond $10$.}}
% \label{fig:ucb_cliff}
% \end{figure}

\subsection{\rev{The GP-UCB Exploration Coefficient}}
\label{sec:ucb}

\rev{GP-UCB requires tuning for reconstruction rather than optimization.}
Across four shadowing levels and two observation rates,
the sweep $\kappa_{\mathrm{UCB}}\in\{1,2,3,5,10,20,50\}$
shows sharp accuracy gains between $3$ and $10$, followed by
a plateau (Fig.~\ref{fig:ucb_cliff}).
We use $\kappa_{\mathrm{UCB}}=20$ within this stable regime,
where GP-UCB is comparable to the other informative rules.

\begin{figure}[t]
\centering
\includegraphics[width=0.65\columnwidth]{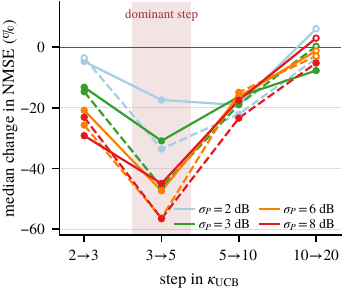}
\caption{\rev{GP-UCB sweep: median paired NMSE change between adjacent
$\kappa_{\mathrm{UCB}}$ settings, $20$ seeds per cell.
Solid: $N_e=1$, dashed: $N_e=2$.
Filled markers indicate improvement on at least $15$ seeds.
All tested conditions show a drop at $3\to5$ and flatten beyond $10$.}}
\label{fig:ucb_cliff}
\end{figure}

\subsection{Robustness to Prior Misspecification}
\label{sec:mismatch}

With synthetic truth fixed at $\rho=50$\,m and $\sigma_P=6$\,dB,
we vary the assumed range over $\{25,50,100\}$\,m and shadowing
standard deviation over $\{3,6,12\}$\,dB, using $30$ paired seeds
per cell for MaxEnt-SN and MaxVar.
Under-specifying shadowing increases residual NMSE by
$30.3$--$82.7\%$ in both rules, over-specifying it costs
$0.6$--$4.6\%$%
\rev{, except at the shortest range, where penalties reach
$29.6\%$ and $14.5\%$}.
Under-specifying range affects MaxEnt-SN more strongly:
$+14.4\%$ versus MaxVar's $-1.1\%$ ($p=0.61$).

\rev{The paired-mean ordering is not preserved.}
At the matched prior, MaxEnt-SN minus MaxVar is $-4.1\%$
relative to MaxVar
(\rev{$95\%$ interval $[-9.12,+0.85]\%$,}
permutation $p=0.063$).
It reverses to $+1.5$--$+10.8\%$ in five of nine cells:
those with under-specified range or shadowing.
\rev{Three reversals remain significant after Holm adjustment
over the eight misspecified cells, the others are unresolved
(unadjusted $p=0.054$ and $0.33$).
The mean ordering favors MaxEnt-SN only where both assumed
parameters are at or above their true values, a descriptive
finding rather than a general condition.}
Over-specification was tested at one larger value per axis
($12$\,dB, $100$\,m).
\rev{The reversal mechanism is unidentified: the two rules differ in
lookahead and pre-screen, so this grid does not isolate skewness.}

\revb{
On the retained ray-traced corpus at $10.025$\,dB, we retain
the Poisson estimator~\eqref{eq:poiss} but generate counts using
five alternatives: the ADR power law ($b=1.072$),
negative-binomial counts with Fano factors $2$ and $5$,
and detection thinning to $80\%$ and $50\%$.
Relative to matched runs, the power law and mild over-dispersion
give unresolved paired median NMSE changes of
$0.00\%$ $[-1.5,+1.2]$ and $+0.6\%$ $[-1.8,+2.6]$.
Strong over-dispersion costs $+4.2\%$ $[+0.5,+8.5]$
and $2.4$ coverage points.
Thinning produces unresolved NMSE changes
($-1.9\%$ and $-0.9\%$, intervals containing zero),
but coverage losses up to $3.1$ points.
Across all five processes, MaxEnt-SN and MaxEnt-G retain a
zero median difference, and every informative rule outperforms
Random. Thus, reconstruction accuracy tolerates the tested
link-exponent and detection-rate errors, but degrades under
strong over-dispersion.}

\subsection{\rev{A Network-Level Metric: Access Coverage at Equal Mean Power Offset}}
\label{sec:asr}

We evaluate map-based uplink power configuration.
\rev{The operator uses the posterior mean to set
\begin{equation}\label{eq:admit}
\begin{split}
a_j &= \mathrm{clip}\bigl(\Delta+\kappa_P(\bar{\mu}-\mu_j),
                         a_{\mathrm{lo}},a_{\mathrm{hi}}\bigr),\\
Q_j &= 1-\bigl(1-\exp(-10^{-(a_j+\kappa_P x_j)/10})\bigr)^L.
\end{split}
\end{equation}
where $\bar{\mu}$ is the ROI mean, $\Delta=6$\,dB,
$[a_{\mathrm{lo}},a_{\mathrm{hi}}]=[-40,23]$\,dB and $L=2$.
Offsets are relative to the power closing the link at
$P_{\mathrm{ref}}$.
The true received margin is $a_j+\kappa_P x_j$,
$Q_j$ is the probability of surviving unit-mean Rayleigh fading
within two independent attempts.
Before clipping, all maps command mean offset $\Delta$ in dB.
Access coverage $\mathrm{ACOV}$ is the ROI mean of $Q_j$.}
We report
$\eta_{\mathrm{ACOV}}=(\mathrm{ACOV}-\mathrm{ACOV}_{\mathrm{trend}})/
(\mathrm{ACOV}_{\mathrm{oracle}}-\mathrm{ACOV}_{\mathrm{trend}})$%
\rev{, computed per run and averaged, using $\mu=\mathbf{m}$
and $\mu=\vx$ as trend-only and oracle references.
Their synthetic mean access coverages are $0.8038$ and $0.9195$.}

\rev{This is a fading-averaged link-budget metric, not random-access
success: collisions, capture, scheduling and latency are excluded.
Equal mean offset in dB does not imply equal mean linear transmit
power, and clipping can change the mean offset.}

\rev{We also classify coverage holes using $z_{\mathrm{th}}$,
the $15\%$ quantile of the prior's pooled ROI marginal.
A hole has $x_j<z_{\mathrm{th}}$, the map admits a cell when
$\mu_j\ge z_{\mathrm{th}}$.
$P_{\mathrm{out}}$ counts admitted holes (a map-classification error, not physical link outage) and $P_{\mathrm{fa}}$
rejected non-holes, both as fractions of all ROI cells,
their sum is total classification error.}

\begin{table}[t]
\centering
\caption{\rev{Baseline comparison. Synthetic columns: means over
$50$ paired seeds at $\sigma_P=6$\,dB, $N_e=1$, $T=200$,
Fig.~\ref{fig:nmse_vs_ne} instead plots medians.
$\kappa=\kappa_{\mathrm{UCB}}$.
RT: median paired residual-NMSE penalty against MaxEnt-SN at
$10.025$\,dB, intervals are in Section~\ref{sec:adaptive}.}
\revb{RT $\eta_{\mathrm{ACOV}}$: normalized access coverage at
the same width, using a truth-derived window-mean reference,
averaged over the same $75$ of $95$ paired runs for every rule, those with oracle-to-trend
span above $0.02$.}
\rev{---: not evaluated.}}
\label{tab:asr}
\scriptsize
\setlength{\tabcolsep}{2.4pt}
\begin{tabular}{lcccccc}
\toprule
Acquisition rule & NMSE$_{\mathrm{resid}}$
& $\eta_{\mathrm{ACOV}}$ & $P_{\mathrm{out}}$ & $P_{\mathrm{fa}}$
& \rev{RT} & \revb{RT $\eta_{\mathrm{ACOV}}$} \\
\midrule
MaxEnt-SN
& 0.3202 & 0.6429 & 0.1072 & 0.0255
& \rev{ref.} & \revb{0.316} \\
MaxEnt-G
& 0.3215 & 0.6444 & 0.1079 & 0.0237
& \rev{$0.00\%$} & \revb{0.310} \\
GP-UCB ($\kappa{=}20$)
& 0.3208 & 0.6470 & \rev{0.1031} & \rev{0.0189}
& \rev{$+0.32\%$} & \revb{0.328} \\
MaxVar
& 0.3371 & 0.6248 & 0.1022 & 0.0199
& \rev{$+1.13\%$} & \revb{0.334} \\
lawnmower
& 0.5405 & 0.4730 & 0.1329 & 0.0121
& \rev{$+14.01\%$} & \revb{0.299} \\
Random
& 0.5631 & 0.4332 & 0.1317 & 0.0158
& \rev{$+18.01\%$} & \revb{0.215} \\
GP-UCB ($\kappa{=}2$)
& 0.9316 & 0.1660 & \rev{0.1752} & \rev{0.0003}
& \rev{---} & \revb{---} \\
\bottomrule
\end{tabular}\\
\raggedright\scriptsize
GP-UCB settings are reported separately.
\rev{MaxMI lacks stored posterior means and is omitted
(Section~\ref{sec:limits}), its median residual NMSE is $0.4841$.}
\end{table}

On the synthetic corpus, MaxEnt-SN, MaxEnt-G, MaxVar and tuned
GP-UCB recover more of the reference span than non-adaptive
sampling, but their differences remain unresolved.
\rev{MaxEnt-SN ranks first on neither decision-layer metric.}
Access coverage translates map quality\rev{ (within-field
correlation with map MSE $0.90$)} rather than validating it independently.

\revb{For ray-traced fields, we set $P_{\mathrm{ref}}$ to each
window's true mean received power, giving oracle margin $\Delta$
before clipping, this oracle reference makes $\eta_{\mathrm{ACOV}}$
a normalized diagnostic, not a deployment success rate.
At $10.025$\,dB, mean oracle, trend-only and MaxEnt-SN access
coverages are $0.9095$, $0.7785$ and $0.8098$.
MaxEnt-SN exceeds Random by $0.0114$ $[0.0073,0.0161]$
and the raster by $0.0055$ $[0.0011,0.0107]$,
winning in $83$ and $66$ of $95$ paired runs.
Informative rules differ by at most $0.004$.
The recovered fraction rises from $0.25$ at $4$\,dB to $0.33$
at $16$\,dB, Table~\ref{tab:asr} reports it only where the
oracle-to-trend span exceeds $0.02$.
Hole classification is uninformative here (almost all cells
admitted, $P_{\mathrm{fa}}<0.003$) and is not reported.}

\subsection{\revb{Measured Fields}}
\label{sec:rw}

\begin{figure}[t]
\centering
\includegraphics[width=\columnwidth]{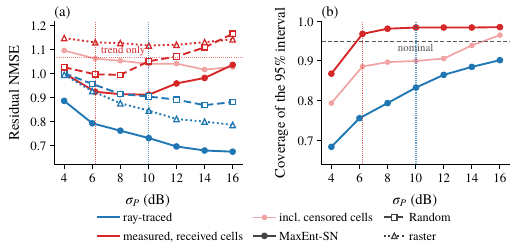}
\caption{\revb{Prior-width sensitivity: (a) pooled median residual NMSE,
(b) mean nominal $95\%$ Laplace coverage for MaxEnt-SN
(other rules differ by at most two percentage points).
Ray-traced: $19$ maps, $95$ runs per width.
Measured: $30$ Antwerp windows, $150$ runs per width,
scored on received cells, light curves additionally score
censored cells at the floor.
Vertical dotted lines mark training residual scales
($10.025$ and $6.22$\,dB), the horizontal line marks
measured-field trend-only NMSE.}}
\label{fig:rw}
\end{figure}

\revb{The Antwerp LoRaWAN corpus~\cite{Aernouts2018} (Zenodo record
3904158, v1.3) contains $130\,430$ uplinks at $868$\,MHz with GPS
positions and per-gateway SNR-corrected received powers (ESP).
We use each gateway's uplink field as a reciprocity-based proxy
for its downlink field, assuming device-power, antenna and
calibration effects are absorbed by a gateway-specific intercept
and neglecting within-band uplink--downlink differences.
Each gateway replaces the transmitter in
Section~\ref{sec:model}, defining $19{\times}19$ windows
of $30$\,m cells.

Cell truth is the mean ESP in dBm of the gateway's receptions in
the cell, treated as fixed. Cells with at least three
logged messages but no reception at the gateway are
left-censored at $-146$\,dBm, $4$\,dB below the lowest ESP in the
dataset, their simulated counts use
that floor. Cells without messages are masked from planning
and scoring. Primary metrics use received cells, censored cells
remain candidates and are separately scored at the floor,
which is a surrogate, not measured truth: non-reception is inferred
from messages logged by other gateways and depends on spreading
factor, interference and collisions, so floor-inclusive results are a
censoring-sensitivity analysis. The terrestrial field is an empirical
RSSI proxy, not an altitude-dependent UAV channel.
The experiment tests conditional reconstruction on the street
network, not full-domain mapping.

Windows require at least $30\%$ measured cells, at most
$30\%$ censoring among them, dynamic range within $40$\,dB
and an external gateway.
The test set contains $30$ windows from $12$ gateways,
with $34$--$61\%$ measured cells.
The trend is fitted on $74\,199$ messages outside these
windows: gateway-specific intercepts, common exponent $2.20$
and residual scale $6.22$\,dB.
Training and test data share gateways, so this is within-corpus
spatial generalization, not a gateway- or site-disjoint test.

Counts remain simulated by~\eqref{eq:poiss} at a fixed $N_e=1$, not
rescaled by~\eqref{eq:Ne-number} to the $900$\,m$^2$ cells (a
$0.041$\,s dwell at the reference density): the measured fields
replace the propagation model, not the LoRaWAN footprint.
We use $d_{\max}=120$\,m (four cells, a $12$\,s travel allowance
at $v_{\max}$), $|\mathcal{A}_t|\le49$,
$\rho=50$\,m, $T=70$, $N_e=1$ and five seeds per window.
Intervals resample $12$ footprint clusters and are exploratory.
Following Section~\ref{sec:plan}, $72/6300$ sweep runs with
capped Newton steps are retained.
Non-finite runs ($1/6300$ in the sweep, $2/2700$ at $N_e=5$, all
GP-UCB on one window) are removed with their pairs.}

\revb{
At $6.22$\,dB, Random and raster residual-NMSE penalties
relative to MaxEnt-SN are $+13.1\%$ $[+8.0,+32.8]$
and $+33.0\%$ $[+20.6,+56.6]$.
The separation persists across widths
($+16.5\%$ and $+33.1\%$ at $10.025$\,dB)
and grows at $N_e=5$ ($+29.3\%$ and $+52.3\%$).
Absolute residual NMSE is $0.92$ at $N_e=1$,
versus trend-only $1.07$, and $0.72$--$0.83$ at $N_e=5$.
Per-message noise is $5.1$\,dB, with a median of
$2$--$12$ messages averaged per cell.

Unlike the ray-traced result, increasing width beyond the
training residual gives no resolved NMSE gain:
the change from $6.22$ to $10.025$\,dB is
$-2.9\%$ $[-7.8,+6.6]$, while $16$\,dB incurs
$+8.8\%$ $[+1.9,+24.0]$.
Coverage is $0.968$ at $6.22$\,dB and about $0.98$
from $8$\,dB upward, interval score is lowest at $6.22$\,dB.
Thus, the training-derived width is supported here rather
than lying on a monotone improvement curve.

GP-UCB has nominally lower error by $3$--$5\%$:
$-3.6\%$ $[-8.5,+0.1]$ at $6.22$\,dB and
$-3.8\%$ $[-8.8,-1.1]$ at $8$\,dB.
MaxVar differs by at most about $3\%$, with intervals
containing zero.
Both leads grow to about $9\%$ at $N_e=5$ and reverse
at $\rho=150$\,m (MaxVar $+10.3\%$ $[+4.7,+14.7]$).
MaxEnt-G and MaxEnt-SN have identical trajectories throughout.

Censored cells constitute $8\%$ of measured cells across $23$ windows
but account for a median $52\%$ of squared error when scored
at the floor. At the training-derived width, the posterior
remains $28$\,dB above the floor, with no interval reaching it.
Including these cells reduces Random and raster penalties to
$+4.1\%$ and $+12.9\%$, NMSE continues improving to
$14$\,dB, coverage reaches $0.965$ at $16$\,dB,
and MaxVar and GP-UCB lead by $2$--$5\%$.
The preferred width therefore depends on the scoring domain:
wider intervals accommodate the floor but do not model censoring.}

\section{\rev{Discussion and Limitations}}
\label{sec:limits}

\noindent\textbf{Residual miscalibration.}
\rev{On the ray-traced corpus, nominal $95\%$ coverage remains
$0.83$ at the training-derived width}%
\revb{ and $0.90$ at the widest tested value.
The test-map sweep is a sensitivity analysis, not a width-selection
procedure}%
\rev{ (Section~\ref{sec:calib}).
Improvement on identical non-adaptive trajectories establishes
an estimator-side contribution to the deficit, but not its full cause.}
\revb{The mean-offset and likelihood interventions do not isolate
the remaining error, spatial structure, covariance mismatch and
inference approximation remain candidates.}
All arms use Gaussian intervals, the largest ray-traced third
cumulant, $0.2877$, bounds neither bias nor error tails.

\noindent\textbf{Failure cases and selection.}
Three retained windows ($603$, $663$, $618$) fail to improve
on a constant predictor, widening worsens window~$663$%
\rev{ for reasons not identified by the logs}.
Retained maps contain less building area than the full test
population ($14.3\%$ versus $23.0\%$)%
\revb{. Results on the $21$ excluded maps
(Section~\ref{sec:width}) preserve the observed separation
from non-adaptive sampling but show larger errors}.

\noindent\textbf{Scope of the acquisition comparison.}
The skewness result is limited to the tested regime
($\max|\delta|=0.886$ over the scanned runs)%
\rev{ and does not establish equivalence}.
Within-family rankings change with sampling density,
reachability and prior misspecification%
\rev{, the reversal mechanisms remain unresolved}.
\revb{GP-UCB has nominally lower error by $3$--$5\%$
on measured fields, although not every contrast is resolved
(Section~\ref{sec:rw}).}

\noindent\textbf{\rev{Truncated skewness correction.}}
\rev{Equation~\eqref{eq:gamma3} omits
$d_i^{(3)}\sigma_i^3$ at observed cells.
Restoring it at $\sigma_P\in\{6,8\}$\,dB,
$N_e\in\{1,2\}$ ($80$ seeds each) changes residual NMSE
by $-0.46$ to $+0.74\%$, with all intervals containing zero.
Its effect elsewhere is untested.}

\noindent\textbf{\revb{Field and observation validity.}}
\revb{The measured-field study is conditional on received cells,
shares gateways between training and testing, and has $12$
resampling clusters, it is not independent-site validation.
Censored cells change the width sensitivity and carry about half
the squared error at the floor, a censoring-aware model is not
implemented. All experiments use simulated counts. The five misspecification tests
(Section~\ref{sec:mismatch}) leave reconstruction accuracy
largely unchanged under the tested ADR and detection-rate
errors, but strong over-dispersion degrades it.
The observation operating point also matters: fivefold exposure
reduces measured-field NMSE by $13$--$21\%$ (Section~\ref{sec:rw}).}
\rev{The IoT traffic model remains an idealized design envelope,
with the assumptions in Section~\ref{sec:iot-instantiation}.}

\noindent\textbf{\rev{Baseline and network-metric coverage.}}
\rev{MaxMI is synthetic-only and lacks the stored posterior
means needed for Table~\ref{tab:asr}.}
\revb{Hole classification is synthetic-only and access coverage is
not evaluated on measured fields, its ray-traced normalization uses a
truth-derived reference (Section~\ref{sec:asr}) and is a link-budget
diagnostic, not measured access success.}

\noindent\textbf{Platform and temporal scope.}
The study is single-UAV, single-transmitter and lattice-based,
wind, obstacles, realistic acceleration, endurance and fixed-wing
constraints are outside~\eqref{eq:reachable}.
Hyperparameters are fixed before flight%
\revb{ except in the online-estimation arms}.
\rev{The load-switch test (Section~\ref{sec:null}) reinterprets
history at the current exposure and does not test exposure-aware
tracking.}

\section{Conclusion}
\label{sec:concl}

We developed an INLA-based framework for adaptive UAV REM sampling
from non-Gaussian IoT observations.
Posterior-informed sampling improved reconstruction over Random
and lawnmower sampling\rev{ at a fixed budget of $70$ measurements,
without an energy claim}, the Skew-Normal correction is characterized
but its isolated benefit is unresolved\revb{, and the proposed rule does
not demonstrably outperform tuned Gaussian baselines}.
Prior width materially affects accuracy and calibration\revb{, with a
preferred value that depends on the corpus and scoring domain}, widening
alone does not eliminate ray-traced undercoverage\revb{, and the tested
online estimator does not recover the wider prior, so these results are
a diagnosis, not a validated width-selection rule. Both corpora pair
propagation fields with simulated counts}.
The findings motivate improving prior\revb{ structure and observation
modelling} rather than further refining acquisition rules.

\bibliographystyle{IEEEtran}
\bibliography{IEEEabrv,references_v26}

\end{document}